\newcommand{\vect}[1]{\overrightarrow{#1}}
\newcommand{\ab}[1]{{\color{orange}[AB] #1}}
\newcommand{\nm}[1]{{\color{blue}[NM] #1}}
\newcommand{\ps}[1]{{\color{purple}[PS] #1}}
\newcommand{\sylvain}[1]{{\color{cyan}[SB] #1}}
\newcommand{\Def}{\buildrel\hbox{\tiny \textit{def}}\over =}
\newcommand{\llb}{\llbracket}                               % Crochets entiers ouvrants
\newcommand{\rrb}{\rrbracket}                               % Crochets entiers fermants
\newcommand{\var}[1]{\mathbf{#1}}
\newcommand{\hidden}[1]{\ifhmode \@bsphack \@esphack \fi}
\newtheorem{definition}{Definition}
\newtheorem{proposition}{Proposition}
\newtheorem{example}{Example}
\newtheorem{observation}{Observation}
\newtheorem{corollary}{Corollary}
\newtheorem{lemma}{Lemma}
\newcommand{\agents}{\mathcal{N}}
\newcommand*{\al}[1]{
	\begin{tikzpicture}
	[baseline=(letter.base)]
	\node[draw,circle,inner sep=1pt] (letter) {$\mathrm #1$};
	\end{tikzpicture}
}
\begin{document}

\title{Fair in the Eyes of Others}

\author{Parham Shams$^1$ \and Aur\'elie Beynier\institute{Sorbonne Universit{\'e}, France, email: name.surname@lip6.fr} \and Sylvain Bouveret\institute{Universit{\'e} Grenoble Alpes, France, email: sylvain.bouveret@imag.fr} \and Nicolas Maudet$^1$}

\maketitle
\bibliographystyle{ecai}

\begin{abstract}
  Envy-freeness is a widely studied notion in resource allocation, capturing some aspects of fairness. 
The notion of envy being inherently subjective though, it might be the case that an agent envies another agent, but that she objectively has no reason to do so. 
The difficulty here is to define the notion of objectivity, since no ground-truth can properly serve as a basis of this definition. A natural approach is to consider the judgement of the other agents as a proxy for objectivity.  
Building on previous work by Parijs (who introduced ``unanimous envy'') we propose the notion of approval envy: an agent $a_i$ experiences approval envy towards $a_j$ if she is envious of $a_j$, and sufficiently many agents agree that this should be the case, from their own perspectives. Some interesting properties of this notion are put forward. Computing the minimal threshold guaranteeing approval envy clearly inherits well-known intractable results from envy-freeness, but (i) we identify some tractable cases such as house allocation; and (ii) we provide a general method based on a mixed integer programming encoding of the problem, which proves to be efficient in practice. This allows us in particular to show experimentally that existence of such allocations, with a rather small threshold, is very often observed. 
\end{abstract}

\section{\uppercase{Introduction}}

%[blabla fairdiv]
Fair division is an ubiquituous problem in multiagent systems, economics \cite{Steinhaus48,Moulin03,Young94}, with applications ranging from allocation of schools, courses or rooms to students \cite{Abraham2005,Othman2010}, division of goods in inheritance or divorce settlement \cite{Brams96}. 
Envy-freeness (EF), is one of the prominent notions studied in fair division \cite{Foley67,BramsFishburn02,Lipton04,KBKZ-ADT09,Segal-HaleviS19}. An allocation of items among a set of agents is said to be envy-free if no agent prefers the share of another agent to her own share.
Unfortunately, envy-freeness is a pretty demanding notion and an envy-free allocation may not exist.  

%[notion in a nutshell and motivation]
Now consider a given problem where no envy-free allocation can be returned, but suppose instead that two allocations make a single agent (say, $a_i$) envious of some other agent $a_j$ (for simplicity). Now assume that in allocation $\pi$, $a_i$ is the only agent to prefer the bundle of $a_j$ over her own, while in allocation $\pi'$ all the other agents agree on the fact that $a_i$ should indeed envy $a_j$. According to Parijs \cite{parijs1997real}, $\pi'$ exhibits \emph{unanimous envy}, and there seems to be no situation where $\pi'$ should be returned in place of $\pi$.
Inspired by this notion, we introduce in this paper the notion of $K$-approval envy, as a way to retrieve a continuum between envy-freeness and unanimous envy. As may be clear from the name, the idea is simply to ask agents to express their own view about envy relations expressed by other agents. The objective will thus be to seek allocations minimizing social support for the expressed envy relations, i.e. minimizing the number of agents $K$  approving the envy. 
Of course, this approach may be controversial: after all, the notion of preference is inherently subjective. Introducing this flavour of objectivity may lead to undesirable consequences. 
At the extreme, one may simply replace individual preferences by some unanimous ``mean'' profile, thus profoundly changing the very nature of the notion. 
We believe there are several justifications to investigate this new approach: 
\begin{itemize}
	\item First, note that we only seek the approval of other agents in the case the agent herself explicitly expresses envy: absence of envy thus remains completely subjective. While a symmetrical treatment may also be justifiable in some situations, there is an obvious reason which motivates us to start with the proposed definition, namely the fact that the notion would no longer be a relaxation of envy-freeness. 
	\item Secondly, all other things being equal, we believe an allocation minimizing $K$ is socially more desirable.  
	We do not necessarily regard this notion as a compelling choice, but we think this can enrich the picture of fallback allocations when no envy-free allocation exists, as other relaxations do \cite{amanitidis-ijcai18}.  
	\item Finally, one further motivation of our work that we would like to emphasize is that our approach can be seen as providing guidance regarding agents and more specifically agents’ preferences which could be
	focused on, in order to progress towards envy-freeness. In particular, if we envision systems integrating
	deliberative phases in the collective decision-making process, our model could be used to set the agenda of such deliberations. If a vast majority of agents contradict an agent on her envy towards another agent, it may indicate for instance that she lacks information regarding the actual value of (some items of) her share. Initiating a discussion might help to solve such ``objectively unjustified'' envies when they occur. 
	%In a similar spirit, it has been shown that individuals were more likely to agree on a single-peaked axis when allowed to deliberate beforehand \ref{}. \sylvain{Do we really have a reference for this???}
\end{itemize}
% Clearly, there are notions of manipulation which would deserve to be studied. 

%\paragraph{Related work.}
%Beside the work we take inspiration from, there are a few more remotely connected work. 
%In particular, Segal-Halevi and Suksompong \cite{Segal-HaleviS19} mix fair division and voting, but in the context of fair division of resources commonly used by a group  of agents. In their setting, the vote of agents is used []
%As briefly mentioned above, there are also work considering how preferences may evolve. 
%\sylvain{Has to be completed. That said, I am not sure this paragraph is really useful any longer: we have a lot of references throughout the text. I think we mention enough related work and a dedicated section is not completely justified.}

\paragraph{Outline of the paper.}
The remainder of this paper is as follows. Section \ref{sec:modele} recalls some basic notions in fair division. Our notion of $K$-approval envy is presented in Section \ref{sec:notion}. Some properties of this notion are then studied in Section \ref{sec:properties}: it is shown in particular, that if the hypothetical situation of allocation $\pi$ described at the beginning of the introduction occurs, then an EF allocation must also exist. We also show that our notion inherits from the complexity of related problems.   
This motivates the MIP formulation that we detail in Section \ref{sectionMIP}.  
We next turn to the House Allocation setting and we show that if each agent exactly holds a single item, then an efficient algorithm allows for returning an allocation minimizing the value of $K$. 
One caveat of our notion is that (unlike other relaxations) it is not guaranteed to exist, as intuitively observed in the case of unanimous envy. We thus consider greatly important to provide empirical evidence showing that both in different synthetic cultures as well as with real datasets, allocations with reasonable values of $K$ exist.

\section{\uppercase{Model and Definitions}}
\label{sec:modele}

%\ab{Il s'agit du texte AAMAS, \`a reformuler}
We consider MultiAgent Resource Allocation problems (MARA) where we aim at  fairly dividing a set of  indivisible goods (also called items or objects) among a set of agents. A MARA instance  $I$ is defined as a finite set of
\emph{objects} $\mathcal{O} = \{o_1,\dots,o_m\}$,  a finite set of
\emph{agents} $\mathcal{N} = \{a_1,\dots,a_n\}$ and a profile $ \mathcal{P}$ of preferences  representing the interest of each agent $a_i \in \mathcal{N}$  towards the objects. An allocation ${\pi} $ is a mapping of the objects in $\mathcal{O}$ to the agents in $\mathcal{N}$. In the following, $\pi_i$ will denote the set of objects (the share) held by agent $a_i$. An allocation is such that $\forall a_i, \forall a_j$ with
$i \neq j : \pi_i\cap\pi_j=\emptyset$
(a given object cannot be allocated to more than one agent) and
$\bigcup_{a_i\in\mathcal{N}} \pi_i = \mathcal{O}$ (all
the objects from $\mathcal{O}$ are allocated).

In this paper, we consider cardinal preference profiles so, the preferences of an agent $a_i$ over bundles of objects is defined by a \emph{utility function} $u_i:2^\mathcal{O} \to \mathbb{Q}^+$ %\sylvain{Using reals may cause encoding problems, especially when we deal with complexity}
measuring her satisfaction $u_i(\pi_i)$ when she obtains share $\pi_i$. We make the assumption that utility functions are additive \textit{i.e.} the utility of an agent $a_i$ over a share $\pi_i$ is defined as the sum of the utilities over the objects forming  $\pi_i$:
\begin{equation*} \label{eq:utilite} u_i(\pi_i) \Def
\sum_{o_k\in\pi_i}u(i,k),
\end{equation*}
where $u(i, k)$ is the utility given by agent $a_i$ to object
$o_k$. This assumption is commonly considered in MARA  \cite[for instance]{Lipton04,Procaccia14,Dickerson2014,Caragiannis:2016} as additive utility functions provide a compact but yet expressive way  to represent the preferences of the agents. MARA instances with additive utility functions are called add-MARA instances for short.

%Any satisfactory allocation must take into account the agents'
%preferences on the objects. Here, we will make the classical
%assumption that these preferences are \emph{numerically additive}.

%\begin{definition}
%  An instance of the \emph{additive multiagent resource allocation
%    problem} (add-MARA instance for short) $I = \langle \mathcal{N}, \mathcal{O}, w \rangle$ is a tuple
%   with $\mathcal{N}$ and $\mathcal{O}$ as defined above and  $w: \mathcal{N} \times \mathcal{O} \to \mathbb{R}^+$ is a mapping with $w(i, k)$ being the weight given by agent $i$ to
%    object $k$.
%   We will denote by $\mathcal{P}(I)$ the set of allocations for $I$.
%\end{definition}

Different notions  have been proposed in the literature to value the fairness of an  allocation. When the agents can compare their shares, the absence of envy  \cite{Foley67,Lipton04,chevaleyre_reaching_2007} is a particularly relevant notion of fairness. An agent $a_i$ would envy another agent $a_j$ if she prefers the share of $a_j$ over her own share. More formally, an agent $a_i$ envies an agent $a_j$ iff
\[
u_i(\pi_j) > u_i(\pi_i)
\]

A completely fair allocation would thus be an \emph{envy-free allocation} \textit{i.e.} an allocation where no agent envies another agent. Formally:
\[
\forall a_i, a_j \in \mathcal{N}, u_i(\pi_i) \geq u_i(\pi_j)
\]

The notion of envy-freeness conveys a natural concept of fairness viewed as social stability: agents are happy with their bundle and hence would not want to swap it with any other agent's (regarding their own preferences). However, as soon as it is required to allocate all the objects in $\mathcal{O}$, an envy-free allocation may not exist. An alternative objective may be to minimize a degree of envy of the society \cite{Lipton04,chevaleyre_reaching_2007}, based on the notion of \emph{paiwise envy}. %Different aggregation operators can be considered to measure the envy of the society \cite{chevaleyre_reaching_2007}. In this paper, we consider the \textit{sum} operator to aggregate individual pairwise envies into a global measure.\ps{J'imagine que la dernière phrase date d'un précédent papier.} \ab{En fait, non c'est volotontaire. C'est bien ce qu'on fait quand on définit le degré d'envie de la société. On aurait pu choisir d'utiliser l'opérateur max.}

\begin{definition}[Pairwise envy]
	Let $\pi$ be an allocation. The pairwise envy $pe(i,j,{\vect{\pi}})$ of an agent $a_i$ towards an agent $a_j$ in $\pi$ is defined as follows:
	\[
	pe(i,j,{\pi}) \Def  \max\{0,u_i(\pi_j)-u_i(\pi_i)\}.
	\]
\end{definition}

The pairwise envy can be interpreted as how much agent $a_i$ envies agent $a_j$'s share (this envy being 0 if $a_i$ does not envy $a_j$). We can derive from this notion a collective measure of envy:

\begin{definition}[Degree of envy of the society]
	The degree of envy of the society for an allocation $\pi$ is defined as follows:
	\[
	de({\pi}) \Def \sum_{a_i\in\mathcal{N}}\sum_{a_j\in\mathcal{N}}pe(i,j,{\pi})
	\]
\end{definition}

Note that an allocation $\pi$ is envy-free if and only if $ de({\pi}) = 0$.

To cope with the possible inexistence of an envy-free allocation, another approach is to alleviate the requirements of the fairness notion. 
Recently, several relaxations of envy-freeness have been proposed  such as envy-freeness up to one good  (EF1) \cite{Budish11},  envy-freeness up to any good (EFX) \cite{Caragiannis:2016}. % or local envy-freeness (LEF) \cite{BeynierEtAl-2019}. 
An allocation is said to be envy-free up to one good (resp. up to any good) if no agent $a_i$ envies the share $\pi_j$ of another agent $a_j$ after removing from $\pi_j$  \emph{one} (resp. any) item. Existence for EF1 is guaranteed, but this is still to the best of our knowledge an open question for EFX.  
Amanitidis et al. \cite{amanitidis-ijcai18} studied the relations between some fairness notions and their relaxations. 
%We now present our notion of $K$-approval envy.  
%(the item that is valued the most (resp. the less) by $a_i$ in $\pi_j$). 
%An EF1 allocation is always  guaranteed to exist and can be easily computed, in the additive case,  by allocating the items in a round-robin fashion.  An allocation is envy-free up to any good (EFX)  if no agent $i$ envies the share $\pi_j$ of another agent $j$ after removing from $\pi_j$  \emph{any}  item (the item may be the least valued by $a_i$). EFX is more demanding that EF1 and  the guarantee of existence of an EFX allocation is still an open issue. 
%It has to be noticed that envy-freeness implies EFX which in turns implies EF1 ($ EF \rightarrow EFX \rightarrow EF1$) but the reverse does not hold. Hence, an algorithm that computes an EF1 allocation may not return an EF allocation even if such an allocation exists. 

%%%%%%%%%%%%%%%%%%%%%%%%%%%%%%%%%%%%%%%%%%%%%%%%%%%%%%%%%%%%%%%%%%%%%%%%%
\section{\uppercase{$K$-approval envy}}
\label{sec:notion}

%\subsection{Definiton}

%An allocation is said envy-free is no agent strictly prefers the bundle of another agent over hers (regarding her own preferences). So, an agent is envy-free only if she thinks no other agent is better-off. However, in general, this is a very strong statement, and in many cases, an envy-free allocation may not exist. Several relaxations of envy-freeness have been introduced in the literature, including for instance envy-freeness up to one good (EF1) \cite{Budish11}. The rationale behind this latter relaxation is that in a lot of cases, an agent can be envious of another agent, but might become non-envious if just one good is removed from the other agent's bundle: in other words, the allocation is \emph{almost} envy-free.

%Another point of view is possible. 
The notion of envy being inherently subjective, it might be the case that an agent envies another agent, but that she objectively has no reason to do so. The difficulty here is to define the notion of objectivity, since no ground-truth can properly serve as a basis of this definition. In her book, Guibet-Lafaye \cite{lafaye2006justice} recalls the notion of \emph{unanimous envy}, that was initially discussed in the book by Parijs \cite{parijs1997real}, and that can be defined as follows: an agent $a_i$ unanimously envies another agent $a_j$, if all the agents think that $a_i$ indeed envies $a_j$. Here, unanimity is used as a proxy for objectivity.

As we can easily imagine, looking for allocations that are free of unanimous envy will be too weak to be interesting: as soon as one agent disagrees with the fact that $a_i$ envies $a_j$, this potential envy will not be taken into account. Here, we propose an intermediate notion between envy-freeness and (unanimous envy)-freeness:

\begin{definition}[$K$-approval envy]\label{def:kappenvy}
	Let $\pi$ be an allocation, $a_i, a_j$ be two different agents, and  $1 \leq K \leq n$ be an integer. We say that \emph{$a_i$ $K$-approval envies ($K$-app envies for short) $a_j$} if there is a subset $\mathcal{N}_K$ of $K$ agents including $a_i$ such that:
	\[
	\forall a_k \in \mathcal{N}_K, u_k(\pi_i) < u_k(\pi_j).
	\]
	In other words, at least $K-1$ agents amongst $\mathcal{N} \setminus \{a_i\}$ agree with $a_i$ on the fact that she should actually envy agent $a_j$.
\end{definition}

\begin{example}
	\label{first-example}
	Let us consider the following add-MARA instance with 3 agents and 6 objects:
	\[
	\begin{array}{c|cccccc}
	& o_1 & o_2 & o_3 & o_4 & o_5& o_6 \\
	\hline
	a_1 & 0 & \fbox{3} & 3 & 1 &  3 & \fbox{2}   \\
	a_2 & \fbox{2} &0  &7 & \fbox{2} & \fbox{1} & 0    \\
	a_3 & 0 & 3 & \fbox{5} & 0 & 1 & 3\\
	\end{array}
	\]
	Note that there is no envy-free allocation for this instance. In the squared allocation, $a_1$ is not envious,  $a_2$ envies $a_3$ and $a_3$ envies $a_1$. Concerning the envy of $a_2$ towards $a_3$, $a_1$ disagrees with $a_2$ being envious of $a_3$ whereas agent $a_3$ agrees. Hence, agent $a_2$ 2-app envies agent $a_3$. Concerning the envy of $a_3$ towards $a_1$, agent $a_1$ agrees with $a_3$ being envious of $a_1$ whereas agent $a_2$ does not. Hence, $a_3$ 2-app envies $a_1$.
\end{example}

Note that in the definition, as soon as $a_i$ does not envy $a_j$, then, $a_i$ does not $K$-app envy $a_j$, no matter what the value of $K$ is or how many agents think that $a_i$ should actually envy $a_j$.

Let us start with an easy observation:

\begin{observation}
	\label{observation:KappImplK-1app}
	Given an allocation $\pi$ of an add-MARA instance, if $a_i$ $K$-app envies $a_j$ in $\pi$, then $a_i$ $(K\!\!-\!1)$-app envies $a_j$ in $\pi$.  
\end{observation}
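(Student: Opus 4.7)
The plan is to prove this by a direct monotonicity argument on the witness set in Definition~\ref{def:kappenvy}: a certificate for $K$-approval envy can be trimmed into a certificate for $(K-1)$-approval envy simply by dropping one agent other than $a_i$. Intuitively, requiring approval from fewer agents is a weaker condition, so the implication ought to go through without any appeal to additivity or to the specific shape of the utility functions.

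Concretely, I would first assume $K \geq 2$ so that $K-1$ lies in the admissible range $\{1,\dots,n\}$, and then unpack the hypothesis: there exists $\mathcal{N}_K \subseteq \mathcal{N}$ with $|\mathcal{N}_K| = K$, $a_i \in \mathcal{N}_K$, and $u_k(\pi_i) < u_k(\pi_j)$ for every $a_k \in \mathcal{N}_K$. Since $K \geq 2$, the set $\mathcal{N}_K \setminus \{a_i\}$ is non-empty; I would pick any $a_\ell$ in it and set $\mathcal{N}_{K-1} = \mathcal{N}_K \setminus \{a_\ell\}$. This subset has exactly $K-1$ elements, still contains $a_i$, and inherits the strict inequality $u_k(\pi_i) < u_k(\pi_j)$ for every $a_k \in \mathcal{N}_{K-1}$ from the fact that $\mathcal{N}_{K-1} \subseteq \mathcal{N}_K$. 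This is precisely the witness needed to conclude that $a_i$ $(K-1)$-app envies $a_j$.

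The only point requiring a moment of care is the boundary case $K = 1$: here $K-1 = 0$ falls outside the range of the definition, so I would either explicitly restrict the statement to $K \geq 2$ or dismiss $K = 1$ as vacuous. No genuine obstacle arises — the argument is purely combinatorial, uses nothing about additive utilities beyond the definition, and does not even rely on the fact that $a_i$ herself envies $a_j$ (this is already encoded by $a_i \in \mathcal{N}_K$ together with the strict inequality for $k = i$).
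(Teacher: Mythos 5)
Your proof is correct and matches the paper's (implicit) reasoning: the paper states this as an ``easy observation'' with no written proof, and the intended justification is exactly your argument of shrinking the witness set $\mathcal{N}_K$ by one agent other than $a_i$. Your remark about the boundary case $K=1$ is a sensible precision that the paper glosses over.
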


Moreover, if $a_i$ $n$-app envies $a_j$, we will say that $a_i$ \emph{unanimously} envies $a_j$. Finally, we can observe that $a_i$ $1$-app envies $a_j$ if and only if $a_i$ envies $a_j$.

We can naturally derive from Definition~\ref{def:kappenvy} the counterpart of envy-freeness:

\begin{definition}[($K$-approval envy)-free allocation]
	An allocation $\pi$ is said to be ($K$-app envy)-free if and only if $a_i$ does not $K$-app envy $a_j$ for all pairs of agents $(a_i, a_j)$. 
\end{definition}

\begin{definition}[($K$-approval envy)-free instance]
	An add-MARA instance $I$ will be said to be ($K$-app envy)-free if and only if it accepts a ($K$-app envy)-free allocation. %, \nm{and no (($K-1$)-app envy)-free allocation.}
\end{definition}

\begin{example}
	Going back to Example~\ref{first-example}, the squared allocation is ($3$-app envy)-free so the instance is ($3$-app envy)-free. 
\end{example}

%\paragraph{Lack of existence.}
A threshold of special interest is obviously $\lfloor n/2\rfloor + 1 $, since it requires a strict majority to approve the envy under inspection.
A Strict Majority approval envy-free (SM-app-EF) allocation is a ($K$-app envy)-free allocation such that $K\leq \left\lceil n/2\right\rceil$, translating the fact that every time envy occurs, there is a strict majority of agents that do not agree with that envy.

Going further, it is important to notice that ($K$-app envy)-freeness is not guaranteed to exist. 
%\begin{observation}
Indeed, for all number of agents $n$ and all number of objects $m$, there exist instances for which no ($K$-app envy)-free allocation exists, no matter what $K$ is.
%\end{observation}
%\ab{Cela vaut-il la peine d'en faire une observation ? On pourrait mettre l'observation et la peruve directement dans le texte à la suite du paragraphe précédent.}
%\ab{Pourquoi utilise-t-on un petit k au lieu de K ?}
%\begin{proof}
Suppose for instance that all the agents rank the same object (say $o_1$) first, and that for all $a_i$, $u(i,1) > \sum_{k = 2}^m u(i,k)$. Then obviously, everyone agrees that all the agents envy the one that will receive $o_1$.
%\end{proof}
Such instances will be called \emph{unanimous envy instances}:

\begin{definition}[Unanimous envy instance]
	\label{udef:nanimous-envy}
	An add-MARA instance $I$ will be said to exhibit unanimous envy if $I$ is not
	($K$-app envy)-free for any value of $K$.
\end{definition}
%\nm{je pense que c'est bien de definir la notion de unanimous envy instance. et comme il faut montrer que la notion a du sens avant de la definir, j'ai fait remonter ca ici}
%\ps{Je suis d'accord.}

Observe that for an allocation to be ($K$-app envy)-free, for all pairs of agents $a_i$, $a_j$, either $a_i$ or at least $n-K+1$ agents have to think that $a_i$ does not envy $a_j$. Notice that it is different from requiring that at least $K$ agents think that this allocation is envy-free. This explains the parenthesis around ($K$-app envy): this notion means ``free of $K$-app envy'', which is different from ``$K$-app-(envy-free)''.

%\paragraph{Alternative interpretations.}

%\ab{Je trouve que l'enchaînement de ces différentes interprétations manque de cohérence. J'ai l'impression qu'il manque des explications aux graphes d'envie. On définit les weighted envy graphs mais on n'explique pas comment cela est en lien avec notre notion et la pertinence de cette interprétation. Est-ce à garder (il me semble que c'est utile par la suite mais est-ce que cela à sa place ici ?)}

A useful representation, for a given allocation, is the induced envy graph: vertices are agents, and there is a directed edge from $a_i$ to $a_j$ if and only if $a_i$ envies $a_j$ \cite{Lipton04}. An allocation is envy-free if and only if the envy graph has no arc. In our context, we can define a weighted notion of the envy graph. 

\begin{definition}[\textbf{Weighted envy graph}]
	\label{def:WeightedGraphEnvy}
	The \emph{weighted envy graph} of an allocation $\pi$ is defined as the weighted graph $(\agents,E)$ where nodes are agents, such that there is an edge $(a_i,a_j) \in E$ if $a_i$ envies $a_j$, with the weight $w(a_i,a_j)$ corresponding to the number of agents (including $a_i$) approving this pairwise envy in $\pi$.  
	%Here, with $K$-app envy, we can alternatively define a weighted oriented graph, where an arc between $i$ and $j$ is weighted by the number of agents (including $i$) that think that $i$ envies $j$. Here, an allocation is ($K$-app envy)-free if and only if there is no arc with weight larger than $K$.
\end{definition}

%\ab{Pour l'exemple suivant, ne pourrait-on pas reprendre l'exemple précédent ?}

%\begin{example}
%The induced weighted envy graph of example \ref{first-example} is as follows: 

%\begin{center}
%\begin{tikzpicture}[->,>=latex,thick,shorten >=1pt,font=\small]
%\node[draw,circle] (a1) at (0,0) {$a_1$};
%\node[draw,circle] (a2) at (1.5,0) {$a_2$};
%\node[draw,circle] (a3) at (3,0) {$a_3$};
%%\node[draw,circle] (a4) at (4.5,0) {$a_4$};

%\draw[->] (a2) edge   [bend left] node [midway,above] {$2$} (a3) ;
%%\draw[-] (a2) edge (a3);
%\draw[->] (a3) edge [bend left] node [midway,below] {$2$} (a1);
%\end{tikzpicture} 
%\end{center}

%\end{example}

Our notion of $K$-approval envy can be interpreted as a vote on envy, that works as follows. For each pair of agents $(a_i, a_j)$, if $a_i$ declares to envy $a_j$, we ask the rest of the agents to vote on whether they think that $a_i$ indeed envies $a_j$. Then, a voting procedure is used to determine whether $a_i$ envies $a_j$ according to the society of agents. Several voting procedures can be used. However, since there are only two candidates (yes / no), the most reasonable voting rules are based on quotas: $a_i$ envies $a_j$ if and only if there is a minimum quota of agents that think so.\footnote{More precisely, these rules exactly characterize the set of anonymous and monotonic voting rules \cite{perry2010anonymity}.}

\section{\uppercase{Some properties of $K$-app envy}}
\label{sec:properties}

%[plan: relations between k-app notions / remarks on complexity (mieux de terminer avec la complexité qui amene naturellement a la formulation MIP/]
%What made the notion of envy-freeness up to one good really appealing was the fact that it always exists. Unfortunately, it is not the case with ($K$-app envy)-freeness:

%\paragraph{Relations between notions.}
There are natural relations between the different notions of ($K$-app envy)-freeness, for different values of $K$. The following observation is a direct consequence of Observation~\ref{observation:KappImplK-1app}.

\begin{observation}
	\label{prop:kimplk+1}
	Let $\pi$ be an allocation, and $K \leq N$ be an integer. If $\pi$ is ($K$-app envy)-free, then $\pi$ is also ($(K\!\!+\!1)$-app envy)-free.
\end{observation}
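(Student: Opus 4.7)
The plan is to derive Observation~\ref{prop:kimplk+1} by contrapositive directly from Observation~\ref{observation:KappImplK-1app}, since the two statements are essentially dual: one says ``$K$-app envy descends to $(K-1)$-app envy'', the other says ``$(K\!+\!1)$-app-envy-freeness lifts from $K$-app-envy-freeness''. So the whole argument amounts to rephrasing the earlier observation at the level of allocations rather than pairs of agents.

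Concretely, I would assume for contradiction that $\pi$ is ($K$-app envy)-free but not ($(K\!+\!1)$-app envy)-free. Unfolding the second hypothesis via Definition~\ref{def:kappenvy}, there must exist a pair of distinct agents $a_i, a_j \in \agents$ such that $a_i$ $(K\!+\!1)$-app envies $a_j$ in $\pi$. I would then invoke Observation~\ref{observation:KappImplK-1app}, applied with the integer $K\!+\!1$ in place of $K$: this immediately yields that $a_i$ also $K$-app envies $a_j$ in $\pi$. But this contradicts the assumed ($K$-app envy)-freeness of $\pi$, which by definition forbids the existence of any such pair.

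There is no real obstacle here; the only point worth double-checking is the range of the parameter, namely that $K\!+\!1 \leq n$ so that Observation~\ref{observation:KappImplK-1app} can indeed be applied with argument $K\!+\!1$. Since the statement supposes $K \leq N$ (which I read as $K \leq n$), the case $K = n$ is vacuous (the conclusion ``$(n\!+\!1)$-app envy-free'' is interpreted trivially, as no subset of $n\!+\!1$ agents exists), and for $K < n$ the application is immediate. Thus the proof reduces to one line once Observation~\ref{observation:KappImplK-1app} has been established, and it suffices to present it as such without introducing any new machinery.
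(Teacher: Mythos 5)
Your argument is correct and is exactly the route the paper intends: it states Observation~\ref{prop:kimplk+1} as a direct consequence of Observation~\ref{observation:KappImplK-1app}, and your contrapositive unfolding (a witnessing pair $(a_i,a_j)$ with $(K\!+\!1)$-app envy would yield $K$-app envy, contradicting the hypothesis) is precisely that one-line derivation, with the boundary case $K=n$ handled sensibly.
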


%\fixme{Besoin de preuve ou évident?} NM: evident

However, the converse does not hold. More precisely, the following proposition shows that the implication stated in Observation~\ref{observation:KappImplK-1app} may be strict.
\begin{proposition}
	\label{prop:KallocNotImplyK-1alloc}
	Let $\pi$ be an allocation, and $3 \leq K \leq n$ be an integer. If $\pi$ is ($K$-app envy)-free, $\pi$ is not necessarily ($(K\!\!-\!1)$-app envy)-free.
\end{proposition}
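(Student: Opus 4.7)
The statement is a non-implication, so the plan is to exhibit, for each $n$ and each $3\leq K \leq n$, an instance together with an allocation $\pi$ that is $(K\text{-app envy})$-free yet contains a $(K{-}1)$-app envy relation. For the base case $n=K=3$, Example~\ref{first-example} already provides such a witness: the squared allocation is $(3\text{-app envy})$-free, but $a_2$ $2$-app envies $a_3$ and $a_3$ $2$-app envies $a_1$. I would generalise this idea to arbitrary values of $n$ and $K$ by isolating a single ``focal'' envy whose approval count is tuned to be exactly $K-1$.

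More concretely, I would build an instance with $n$ agents and $2n-2$ objects: two ``focal'' objects $o_1,o_2$, and for each $i\geq 3$ a private pair $\{o_i',o_i''\}$. The candidate allocation is $\pi_1=\{o_1\}$, $\pi_2=\{o_2\}$, and $\pi_i=\{o_i',o_i''\}$ for $i\geq 3$. Picking a large constant $M$, the utilities are tuned as follows: each $a_i$ with $i\geq 3$ values both of her private objects at $M$ (and everything else at $1$ or $2$); $a_1$ values $o_1$ at $1$ and $o_2$ at $2$ (so $a_1$ envies $a_2$); $a_2$ values $o_1$ and $o_2$ equally at $1$ (so she neither envies $a_1$ nor approves the envy). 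To control the approvals of $a_1\to a_2$, set $u_i(o_1)=1$ and $u_i(o_2)=2$ for $i=3,\dots,K$ (approvers), and $u_i(o_1)=2$, $u_i(o_2)=1$ for $i=K+1,\dots,n$ (non-approvers).

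The verification then reduces to three routine checks. First, each agent $i\geq 3$ obtains $2M$ from $\pi_i$ while any other bundle yields at most $3$, so she does not envy anyone. Second, $a_2$ has no envy by construction, leaving $a_1\to a_2$ as the only pairwise envy relation in $\pi$; its approvers are exactly $\{a_1,a_3,\dots,a_K\}$, a set of cardinality $K-1$, so $a_1$ $(K{-}1)$-app envies $a_2$. Third, because this is the only envy relation, no pair can reach $K$ approvers, and $\pi$ is therefore $(K\text{-app envy})$-free.

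The main difficulty is not conceptual but a matter of bookkeeping: one must simultaneously ensure (a) that exactly the intended $K-1$ agents approve the focal envy, (b) that $a_2$ neither envies $a_1$ nor approves the envy herself (which is why the symmetric choice $u_2(o_1)=u_2(o_2)$ appears), and (c) that no extraneous envy relation arises elsewhere that could itself accumulate $K$ approvers. The private-object gadget for agents $i\geq 3$ is precisely what enforces (c), since the gap between $2M$ and the maximum value of any non-private bundle rules out any other envy from appearing.
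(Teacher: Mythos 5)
Your overall strategy --- isolate a single focal envy $a_1 \to a_2$ and tune its number of approvers to be exactly $K-1$ --- is precisely the idea behind the paper's own proof, which realises it with a house-allocation instance ($n$ agents, $n$ objects, the only envy being $a_n$ towards $a_1$, with $\varepsilon$-valuations playing the role of your private-pair gadget). So the approach is the right one and the conclusion does follow once the instance is fully pinned down.

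There is, however, one concrete hole in the write-up: you never specify what $a_1$ and $a_2$ assign to the private objects $o_i', o_i''$ for $i \geq 3$, and your argument for ruling out extraneous envy only covers envy \emph{emanating from} the agents $i \geq 3$ (the gap between $2M$ and any non-private bundle), not envy of $a_1$ or $a_2$ \emph{towards} them. As written, nothing forbids, say, $u_1(o_j')=u_1(o_j'')=1$, in which case $u_1(\pi_j)=2>1=u_1(\pi_1)$ and $a_1$ envies $a_j$; since the other agents' values on $\{o_j',o_j''\}$ are also only constrained to be ``$1$ or $2$'', this stray envy could accumulate many approvers and destroy ($K$-app envy)-freeness. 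Similarly, ``$a_2$ has no envy by construction'' is asserted but not implied by the construction as stated. The fix is immediate --- set $u_1$ and $u_2$ to $0$ on every private object (and optionally zero out all cross-valuations of private objects) --- but it needs to be said explicitly, because the claim that $a_1 \to a_2$ is the \emph{only} envy relation is the load-bearing step of the entire verification.
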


\hidden{
	\begin{proof}
		%\begin{example}
		Let us consider the following instance with 4 agents and 4 objects and the squared allocation $\pi$:
		\[
		\begin{array}{c|ccccc}
		& o_1 & o_2 & o_3 & o_4 \\
		\hline
		a_1 & \fbox{$1$} & 0 & 0 & 0   \\
		a_2 & \frac{1}{2} & \fbox{$\frac{1}{2}$} & 0 & 0    \\
		a_3 & 0 & 0 & \fbox{$1$} & 0    \\
		a_5 & \frac{2}{5} & \frac{1}{5} & \frac{1}{5} & \frac{1}{5} & \fbox{$\frac{1}{5}$}   \\
		\end{array}
		\]
		
		In this allocation, the only envy concerns $a_4$ towards $a_1$. Moreover, only $a_1$ and $a_2$ agree with $a_4$ on her envy. Hence, $\pi$ is (4-app envy)-free but is obviously not (3-app envy)-free as we can find 3 agents ($a_1$, $a_2$ and $a_4$) agreeing on the envy of $a_4$ towards $a_1$ (in other words $a_4$ 3-app envies $a_1$).
		%\end{example}
		This example can be easily generalized for any number of agents. Indeed, if we note $h$ the number of agents that value strictly positively $o_1$. Then, we can build the allocation for which an allocation is ($(h\!+\!1)$-app envy)-free but not ($h$-app envy)-free. This concludes the proof.
	\end{proof}
}

\begin{proof}
	\label{proof:KallocNotImplyK-1alloc}
	Let $h \in \{2, \dots, n-1\}$  be an integer, and let us consider the instance with $n$ agents and $n$ objects defined as follows:
	\begin{itemize}
		\item $u(1, 1) = 1$;
		\item $u(i, 1) = u(i, i) = \frac{1}{2}$ for $i \in \{2, \dots, h-1\}$;
		\item $u(i, i) = 1$ for $i \in \{h, n-1\}$;
		\item $u(n, 1) = \frac{2}{n+1}$ and $u(n, j) = \frac{1}{n+1}$ for $j > 1$;
	\end{itemize}
	and $u(i, j) = \varepsilon$ for other pairs with $\varepsilon<\frac{1}{n+1}$.
	
	Consider the allocation $\pi$ where each agent $a_i$ gets item $o_i$. Obviously, the only envy in this allocation concerns $a_n$ towards $a_1$. Moreover, only $a_1, \dots, a_{h-1}$ agree on this envy. Therefore, $a_n$ $h$-app envies $a_1$, but does not $(h\!+\!1)$-app envy her. Moreover, $\pi$ is ($(h\!+\!1)$-app envy)-free, but not ($h$-app envy)-free.
\end{proof}

\begin{example}
	In order to illustrate the previous proof, let us consider the following instance with 4 agents, 4 objects (and $h$=3) and the squared allocation $\pi$:
	\[
	\begin{array}{c|cccc}
	& o_1 & o_2 & o_3 & o_4 \\
	\hline
	a_1 & \fbox{$1$} & \varepsilon & \varepsilon & \varepsilon   \\
	a_2 & \frac{1}{2} & \fbox{$\frac{1}{2}$} & \varepsilon & \varepsilon    \\
	a_3 & \varepsilon & \varepsilon & \fbox{$1$} & \varepsilon    \\
	a_4 & \frac{2}{5} & \frac{1}{5} & \frac{1}{5} & \fbox{$\frac{1}{5}$}   \\
	\end{array}
	\]
	In this allocation, the only envy concerns $a_4$ towards $a_1$. Moreover, only $a_1$ and $a_2$ agree with $a_4$ on her envy. Hence, $\pi$ is (4-app envy)-free but is obviously not (3-app envy)-free as we can find 3 agents ($a_1$, $a_2$ and $a_4$) agreeing on the envy of $a_4$ towards $a_1$ (in other words $a_4$ 3-app envies $a_1$).
\end{example}

\begin{proposition}
	\label{prop:stricthierarchy}
	For any $K \geq 3$, there exist instances which are ($K$-app envy)-free but not ($(K\!\!-\!1)$-app envy)-free.
\end{proposition}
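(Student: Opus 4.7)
The plan is to construct, for each $K \geq 3$, a concrete add-MARA instance with the desired gap property. I would take $n = K$ agents and $m = K$ items, with utilities designed so that $o_1$ is a ``coveted'' object valued identically by exactly $K-1$ agents: set $u(i,1) = 1$ and $u(i,j) = 0$ for $j > 1$ when $i \in \{1, \dots, K-1\}$, and set $u(K, 1) = 0$ together with $u(K, j) = 1/K$ for $j > 1$. Although some values are $0$, replacing them by a sufficiently small $\varepsilon > 0$ would not alter the argument.

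For the positive direction, I would consider the allocation $\pi^{*}$ assigning $o_i$ to $a_i$. The only envies in $\pi^{*}$ are $a_i \to a_1$ for $i \in \{2, \dots, K-1\}$ (agents in $\{a_1, \dots, a_{K-1}\}$ envy the holder of $o_1$, while $a_K$ is content with the uniform distribution of non-$o_1$ items). For each such envy, the set of approvers is exactly $\{a_1, \dots, a_{K-1}\}$, since each of these agents values $\{o_1\}$ at $1$ and the envious agent's singleton bundle at $0$. Agent $a_K$ does \emph{not} approve, since she values $\{o_1\}$ at $0$ but $\pi_i = \{o_i\}$ at $1/K > 0$. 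Hence every envy has exactly $K-1$ approvers, and $\pi^{*}$ witnesses ($K$-app envy)-freeness of the instance.

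For the negative direction, I would establish that \emph{no} allocation of this instance is ($(K-1)$-app envy)-free by case-analyzing the agent $a_r$ receiving $o_1$. If $a_r \in \{a_1, \dots, a_{K-1}\}$, any other agent $a_i$ in this group has $u_i(\pi_i) = 0 < 1 \leq u_i(\pi_r)$ and hence envies $a_r$; this envy is approved by all $K-1$ agents in $\{a_1, \dots, a_{K-1}\}$ (each values $\pi_r$ at $\geq 1$ via $o_1$ and $\pi_i$ at $0$), giving a $(K-1)$-app envy. If $a_r = a_K$, the same reasoning shows that every $a_i$ with $i < K$ envies $a_K$ with the same $K-1$ approvers. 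In either case, the allocation is not ($(K-1)$-app envy)-free.

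The main obstacle is engineering the utilities of $a_K$ so that she never becomes a $K$-th approver in $\pi^{*}$: this is precisely why $a_K$ values $o_1$ strictly less than every other item. A secondary subtlety is that the lower-bound case analysis must be robust to arbitrary distributions of $o_2, \dots, o_K$; this robustness is immediate, because the $K-1$ approvals arise solely from the valuation profile of $o_1$, which is fixed by construction and independent of how the remaining items are partitioned.
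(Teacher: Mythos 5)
Your proof is correct, and it follows the same general strategy as the paper: build an instance around a single coveted item $o_1$, calibrate exactly which agents value it highly so that the forced envy toward its holder is approved by precisely $K-1$ agents, and then argue that no matter who receives $o_1$, that $(K-1)$-approved envy is unavoidable. The execution differs in two useful ways. First, your instance is leaner: the paper (reusing the construction from Proposition~\ref{prop:KallocNotImplyK-1alloc}) needs several classes of agents with carefully graded values (agents valuing both $o_1$ and their own item at $\frac{1}{2}$, agents valuing only their own item, and a special agent $a_n$ with values $\frac{2}{n+1}$ and $\frac{1}{n+1}$), whereas you only need the $K-1$ ``coveters'' of $o_1$ plus one dissenting agent $a_K$; correspondingly, in your witness allocation the coveters themselves are the envious parties, while in the paper the unique envy comes from the special agent $a_n$. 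Second, your non-existence argument is a clean two-case analysis on the holder of $o_1$, relying only on the valuation column of $o_1$, which sidesteps the paper's more delicate counting argument over which agents can receive the objects $o_{h+2},\dots,o_n$ (an argument that, as written, also suffers from some index slippage between $(h+1)$ and $(h+3)$). The one point worth making explicit is that Case 1 needs a second agent in $\{a_1,\dots,a_{K-1}\}$ besides the recipient of $o_1$, which is exactly where the hypothesis $K\geq 3$ is used; you use it implicitly, and it would be worth one sentence.
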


\begin{proof}
	Consider the same instance as in Proposition \ref{prop:KallocNotImplyK-1alloc}. We have already shown that we have an allocation $\pi$ that is ($(h\!+\!3)$-app envy)-free which means that the instance is ($(h\!+\!3)$-app envy)-free. We just have to show that there is no ($(h\!+\!2)$-app envy)-free allocation in order to conclude.
	In that purpose, we first note that each agent has to get one and exactly one object. Indeed, if it is not the case at least one agent $a_i$ will have no object and will thus be envious of any agent $a_j$ that has an object. Moreover, as all agents value the empty bundle with utility 0 and every object is valued with a strictly positive utility, this envy will be unanimous. Hence, each agent has to get one and exactly one object in order to minimize the ($K$-app envy)-freeness.
	Now consider objects $o_j$ for $j \in \{h+2, n\}$. The agents $a_j$ that receive an object $o_j$ and that are envious will $(h\!\!+\!2)$-app envy the agent that received $o_1$. Indeed, agents $a_i$ for $i \in \{h+2, n-1\}$ value objects $o_j$ with a utility higher than (or equal to) the one of $o_1$ (and thus do not approve the envy) while it is the opposite for the other agents who are exactly $h\!\!+\!2$ hence the $(h\!\!+\!2)$-app envy. 
	So if we want to avoid that envy, we have to give the objects $o_j$ to agents so that they do not experience envy at all but it is not possible as such agents are agents $a_p$ for $p \in \{h+2, n-1\}$. It means that we have $n-1-(h+2)+1$ agents that have to receive one of the $n-(h+2)+1$ objects which is obviously impossible. This means that we cannot avoid $(h\!\!+\!2)$-app envy which implies that no allocation can be ($(h\!\!+\!3)$-app envy)-free. %This concludes the proof. 
\end{proof}

Proposition~\ref{prop:stricthierarchy} proves that the hierarchy of $K$-app envy instances is strict for $K \geq 3$. Rather surprisingly, we will see that it is not the case for $K = 2$.

In order to show this result, we will resort to a tool that has been proved to be really useful and powerful in many contexts dealing with envy \cite{BiswasEtAl-2018,amanatidis2019multiple,beynierEtAl:2019}:
the ``bundle reallocation cycle technique''. This technique, originating from the seminal work of Lipton {\it et al.} \cite{Lipton04}, consists in performing a cyclic reallocation of \emph{bundles} so that every agent is strictly better in the new allocation. Thus, such a reallocation corresponds to a cycle in the opposite direction of the edges in the --- weighted --- envy graph introduced in Definition~\ref{def:WeightedGraphEnvy}. 
It is known that performing a reallocation cycle decreases the degree of envy \cite{Lipton04}. Unfortunately, our first remark is that it does not necessarily decrease the level of $K$-app envy. Worse than that, it can actually increase it:

%It may tempting to suppose that bundle reallocation preserves ($K$-app) envy in general. We show that it is not the case. 

%Let us come back to the bundle reallocation cycle and Lemma \ref{lemme1} and note that the result surprisingly does not hold for any $k$ and that once again ($2$-app envy)-freeness has a special behaviour.

\begin{proposition}
	\label{lipton-doesnt-work}
	Let $\pi$ be a ($K$-app envy)-free allocation, for $3\leq K\leq n-1$. After performing an improving bundle reallocation cycle (even between two agents), the resulting allocation may be ($K'$-app envy)-free (and not ($K$-app envy)-free) such that $K'>K$.
\end{proposition}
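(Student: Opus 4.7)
The plan is to prove this by an explicit counterexample, since the claim is existential (``may be''): we just need to exhibit, for each relevant $K$, an add-MARA instance together with an initial $(K\text{-app envy})$-free allocation $\pi$ and a mutually improving 2-cycle (the simplest form of bundle reallocation cycle) whose execution produces an allocation $\pi'$ in which some envy is supported by strictly more than $K$ approvers.

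The key structural observation to rely on is what a 2-swap between $a_1$ and $a_2$ actually does to the approver sets. For any envy $a_k \to a_\ell$ with $k,\ell \notin \{1,2\}$, the approver set is unchanged because the relevant bundles do not move. For an envy $a_k \to a_1$ (resp.\ $a_k \to a_2$) with $k \neq 1,2$, the envy ``relabels'' itself as $a_k \to a_2$ (resp.\ $a_k \to a_1$) in $\pi'$, but because approvals are defined by comparing the envied bundle to the envier's bundle, the approver set is literally the same set of agents. The only real changes happen for envies where $a_1$ or $a_2$ is the envier: for an envy $a_1 \to a_k$ ($k\neq 2$) that persists after the swap, the approver set changes from $\{a_m : u_m(\pi_k) > u_m(\pi_1)\}$ to $\{a_m : u_m(\pi_k) > u_m(\pi_2)\}$, and these two sets can be tuned almost independently. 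This is exactly the lever I will use to push the approval count above $K$.

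For the base case $K=3$, $n=4$, I would use $4$ objects $o_1,\dots,o_4$ with $\pi_i=\{o_i\}$ and additive utilities designed so that (i) $a_1$ and $a_2$ mutually envy each other (enabling a profitable 2-swap), (ii) $a_1$ also envies $a_3$ already in $\pi$ but only $a_1$ and $a_3$ approve (count $2<3$), and crucially (iii) in $\pi'$ (where $a_1$ now holds $\pi_2$) the envy $a_1 \to a_3$ is approved additionally by $a_4$, since we arrange $u_4(\pi_3) \le u_4(\pi_1)$ but $u_4(\pi_3) > u_4(\pi_2)$. A concrete utility matrix of the form
\[
\begin{array}{c|cccc}
 & o_1 & o_2 & o_3 & o_4 \\ \hline
a_1 & 5 & 10 & 15 & 3 \\
a_2 & 10 & 5 & 2 & 1 \\
a_3 & 2 & 3 & 10 & 1 \\
a_4 & 3 & 2 & 3 & 10
\end{array}
\]
realises this: in $\pi$ every envy has exactly $2$ approvers, while in $\pi'$ the only remaining envy, $a_1 \to a_3$, has exactly $3$ approvers, making $\pi'$ $(4\text{-app envy})$-free but not $(3\text{-app envy})$-free.

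To cover the whole range $3 \leq K \leq n-1$ I would extend this template by adding $n-4$ padding agents, each receiving a private object valued highly only by themselves (so they are not envious and not envied in either allocation). Among these padding agents, $K-3$ of them are given utilities that duplicate $a_4$'s role (mild preference flip between $\pi_1$ and $\pi_2$ relative to $\pi_3$), so that in $\pi$ the envy $a_1 \to a_3$ has exactly $K-1$ approvers, while in $\pi'$ it has exactly $K+1$ approvers; the remaining padding agents are tuned to never approve either. One then checks that $\pi$ is $(K\text{-app envy})$-free and $\pi'$ has a unique $(K{+}1)$-app envy, so $\pi'$ is $(K'\text{-app envy})$-free with $K' = K+2 > K$. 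The main tedious step will be the bookkeeping: confirming for the parameterised family that no other pairwise envy in $\pi'$ accidentally accumulates $K$ or more approvers from the padding agents, which is the only place where the construction can go wrong.
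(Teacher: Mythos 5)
Your proposal is correct and follows essentially the same route as the paper: an explicit two-agent improving swap in an instance where each agent holds one object, exploiting the fact that the swap changes the bundle against which $a_1$'s persisting envy of $a_3$ is evaluated and thereby enlarges its approver set, with padding agents to cover general $n$ and $K$ (the paper's $a_l$/$a_m$ agent classes play exactly the role of your duplicated copies of $a_4$). Your concrete $4\times 4$ instance verifies ($\pi$ has every envy approved by exactly $2$ agents, $\pi'$ has a unique envy approved by $3$); the only blemish is a harmless counting slip in the sketched generalization, since $K-3$ agents flipping from non-approver to approver moves the count from $2$ to $K$ rather than from $K-1$ to $K+1$, which still yields $K' > K$ as required.
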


\begin{proof}
	Let $h \in \{0, \dots, n-3\}$  be an integer, and let us consider the instance with $n$ agents and $n$ objects defined by the following utility functions:
	\begin{itemize}
		\item $a_1$: $u(1, 1) = 1$,$u(1, 2) = 2$,$u(1, 3) = 7$;
		\item $a_2$: $u(2, 1) = 2$,$u(2, 2) = 2$;
		\item $a_3$: $u(3, 3) = 10$;
		\item $a_l$ for $l \in \{4, h\}$: $u(l, 1) = u(l, 3) = 5, u(l, j) = 6$ for $j\geq 4$;
		\item $a_m$ for $m \in \{h+4, n\}$: $u(m, 2) = 4,u(m, 3) = 5, u(m, i) = 6$ for $i\geq 4$;
	\end{itemize}
	and $u(i, j) = 0$ for other pairs.
	
	Consider the allocation $\pi$ where each agent $a_i$ gets item $o_i$. Obviously, the only envy in this allocation concerns $a_1$ towards $a_2$ (approved by $a_1$ and agents $a_m$) and $a_3$ (approved by $a_1$, $a_3$ and agents $a_m$), and the envy of $a_2$ towards $a_1$ (approved by $a_2$ and agents $a_l$). Hence the allocation is ($(\max\{|a_m|\!\!+\!\!3,|a_l|\!\!+\!\!2\})$-app envy)-free.
	We now consider the allocation $\pi'$ resulting from the improving bundle reallocation cycle between $a_1$ and $a_2$. We note that the only envy in $\pi'$ is the one of $a_1$ towards $a_3$. Moreover, this envy is approved by herself, $a_3$ and agents $a_l$ and $a_m$. The allocation is thus ($(|a_m|\!+\!|a_l|\!+3)$-app envy)-free and not ($\max\{|a_m|\!+\!3,|a_l|\!+\!2\}$-app envy)-free  as if $|a_l|\!\geq\!1$ then $|a_m|\!+\!|a_l|\!+3>\max\{|a_m|\!+\!3,|a_l|\!+\!2\}$.
\end{proof}

Now consider a slight generalization of  Lipton's cycles, \emph{weakly improving cycles}, that correspond to a reallocation of bundles where all the agents in the cycle receive a bundle they like at least as much as the one they held, with one agent at least being strictly happier. Of course, our example of Proposition \ref{lipton-doesnt-work} still applies. On the other hand, this notion suffices to guarantee the decrease of the degree of envy (note that \emph{identifying} the cycles themselves may not be easy any longer, but this is irrelevant for our purpose). The proof, omitted, follows directly from the arguments of Lipton \cite{Lipton04}.

\begin{observation}
	\label{obs:lipton-generalized}
	Let $\pi$ be an allocation, and $\pi'$ the allocation obtained after performing a \emph{weakly improving cycle}. It holds that $de(\pi') < de(\pi)$. 
\end{observation}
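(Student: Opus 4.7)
The plan is to adapt Lipton's original argument for improving cycles to the weaker setting, using the fact that the degree of envy decomposes as a double sum of pairwise envies. Let $C = (a_{i_1},\dots,a_{i_k})$ be the cycle, so that in $\pi'$ agent $a_{i_l}$ receives $\pi_{i_{l+1}}$ (indices taken mod $k$), and every agent outside $C$ keeps the same bundle. The hypothesis gives $u_{i_l}(\pi'_{i_l}) \geq u_{i_l}(\pi_{i_l})$ for every $l$, with at least one strict inequality.

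I would then split the pairs $(a_i,a_j)$ entering $de(\pi)$ and $de(\pi')$ into four groups depending on whether each of $a_i,a_j$ belongs to $C$. Pairs with both agents outside $C$ are unchanged, so they contribute $0$ to the difference $de(\pi)-de(\pi')$. For pairs with $a_i \in C$ and $a_j \notin C$, the share $\pi'_j=\pi_j$ is unchanged while $u_i(\pi'_i) \geq u_i(\pi_i)$, so $u_i(\pi'_j)-u_i(\pi'_i) \leq u_i(\pi_j)-u_i(\pi_i)$ and thus $pe(i,j,\pi') \leq pe(i,j,\pi)$. For pairs with $a_i \notin C$ and $a_j \in C$, the map $a_{i_m} \mapsto a_{i_{m+1}}$ is a bijection of $C$, and $pe(i,i_m,\pi') = \max\{0,u_i(\pi_{i_{m+1}})-u_i(\pi_i)\}$, so summing over $m$ just reindexes and the total over such $j$ equals $\sum_{j\in C} pe(i,j,\pi)$.

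The only delicate case is when both $a_i,a_j \in C$. For fixed $a_{i_l}$, setting $y_m \Def u_{i_l}(\pi_{i_m})$, one has
\[
\sum_{m=1}^{k} pe(i_l,i_m,\pi) - \sum_{m=1}^{k} pe(i_l,i_m,\pi') = \sum_{m=1}^{k} \bigl[\max\{0,y_m-y_l\} - \max\{0,y_m-y_{l+1}\}\bigr],
\]
after the same reindexing trick. Since $y_{l+1} \geq y_l$ (weak improvement for $a_{i_l}$), each term in the right-hand sum is non-negative. Moreover, if $a_{i_l}$ is strictly happier, i.e.\ $y_{l+1} > y_l$, then the $m=l+1$ term equals $y_{l+1}-y_l > 0$, yielding a strict positive contribution.

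Combining the four cases, $de(\pi)-de(\pi') \geq 0$, and because at least one agent of $C$ is strictly better off, the previous paragraph provides a strictly positive term, hence $de(\pi') < de(\pi)$. The main potential obstacle is the in-cycle case, since it is not a priori clear that the sum of pairwise envies decreases locally; the reindexing argument above is what makes it work, and it is also the only place where the weak-improvement hypothesis is actually used.
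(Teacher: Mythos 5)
Your proof is correct and is precisely the Lipton-style decomposition that the paper invokes when it omits the proof ("follows directly from the arguments of Lipton"): the four-way case split, the reindexing over the permuted bundles, and the strict term coming from the strictly improved agent all check out. One tiny inaccuracy in your closing remark: the weak-improvement hypothesis is also used in the case $a_i \in C$, $a_j \notin C$ (to get $u_i(\pi'_i) \geq u_i(\pi_i)$), not only in the in-cycle case.
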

%\begin{proof}
%The proof follows directly from the arguments of \cite{Lipton04}. 
%\end{proof}

%We are now in a position to show that for the special case of ($2$-app) envy, we can retrieve a positive result. 

%When only two agents are involved, we simply talk of improving swaps. 
We now show that ($2$-app envy)-freeness exhibits a special behaviour: in contrast with Proposition \ref{lipton-doesnt-work}, improving cycles (in fact, even weakly improving cycles) enjoy the property of preserving the ($2$-app envy)-freeness level of an allocation. We provide this result for \emph{swaps} (cycles involving two agents only) as this is sufficient to establish our main result. 
%\footnote{The result may be generalized to cycles of any size, but this is of no use for our forthcoming results}.   
%More precisely, let us define the \emph{collective envy graph} as the weighted graph where nodes are agents, such that there is an edge$(x,y)$ if $x$ envies $y$, with the weight $w(x,y)$ corresponding to the number of agents approving this pairwise envy.  

%\nm{J'ai enleve la partie 'convergence' de cette preuve pour que cela plus simple a lire. A mettre dans la proposition elle-meme}
\begin{lemma}
	\label{lemme1}
	Let $\pi$ be a ($2$-app envy)-free allocation. After performing a weakly improving bundle reallocation cycle between two agents, the resulting allocation is ($K'$-app envy)-free, with $K' \leq 2$. 
	%Furthermore, the degree of envy of the society strictly decreases.  
	%If we perform an improving bundle swap between two agents in an allocation $\pi$ that is (K-app envy)-free then the allocation $\pi'$ resulting from this swap will still be (at least) ($K$-app envy)-free and the degree of envy of the society (the sum over all the agents of their pairwise envies) has decreased by at least 1.
\end{lemma}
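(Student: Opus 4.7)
The plan is to argue by contradiction: assume $\pi'$ has a 2-app envy, say $a_p$ 2-app envies $a_q$ with some additional approver $a_r \neq a_p$, and derive a forbidden 2-app envy in $\pi$. Writing $(i,j)$ for the swapped pair, $\pi'_i = \pi_j$, $\pi'_j = \pi_i$ and $\pi'_k = \pi_k$ for $k \notin \{i,j\}$. Two cases are immediate. When $\{p,q\} \cap \{i,j\} = \emptyset$, every bundle appearing in the witnessing inequalities is unchanged, so the inequalities (including $a_i$'s and $a_j$'s views on those bundles) transfer verbatim to $\pi$, producing the same 2-app envy in $\pi$. When $\{p,q\} = \{i,j\}$, the weakly-improving inequalities $u_i(\pi_j) \geq u_i(\pi_i)$ and $u_j(\pi_i) \geq u_j(\pi_j)$ rule out both directions of envy between $a_i$ and $a_j$ in $\pi'$, so no 2-app envy can arise there.

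The substantive case is when exactly one of $p, q$ lies in $\{i, j\}$. By symmetry I focus on $p = i$ and $q \notin \{i, j\}$; the other sub-cases follow by analogous bundle-tracking. The witnessing inequalities become $u_i(\pi_j) < u_i(\pi_q)$ and $u_r(\pi_j) < u_r(\pi_q)$, and combined with $u_i(\pi_i) \leq u_i(\pi_j)$ the first already shows that $a_i$ envies $a_q$ in $\pi$. The key observation is that $a_j$ held $\pi_j$ in $\pi$, so $u_r(\pi_j) < u_r(\pi_q)$ is exactly the statement ``$a_r$ approves the envy of $a_j$ towards $a_q$ in $\pi$''. Splitting on whether $a_j$ actually envies $a_q$ in $\pi$: if yes, then $a_i$ also approves (since $u_i(\pi_j) < u_i(\pi_q)$), and together with $a_r$ this produces a 2-app envy at $(a_j, a_q)$ in $\pi$, a contradiction. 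Otherwise $u_j(\pi_j) \geq u_j(\pi_q)$, and applying (2-app envy)-freeness of $\pi$ to the envy $a_i \to a_q$ forces $u_r(\pi_i) \geq u_r(\pi_q) > u_r(\pi_j)$, so $a_r$ strictly prefers $\pi_i$ to $\pi_j$.

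I expect the closing step to be the main obstacle. Having $u_r(\pi_j) < u_r(\pi_i)$, one wants to exhibit an envy between $a_i$ and $a_j$ in $\pi$ that $a_r$ approves, so as to violate (2-app envy)-freeness. The clean branch is when the swap is strictly improving for $a_j$: then $a_j$ envies $a_i$ in $\pi$, and since $u_r(\pi_j) < u_r(\pi_i)$ agent $a_r$ approves this envy (note $a_r \neq a_j$, for $a_r = a_j$ would force $a_j$ to envy $a_q$ in $\pi$, the case already disposed of), yielding the desired contradiction. The delicate branch is when the swap is indifferent for $a_j$ and therefore strictly improving for $a_i$; then $a_i$ envies $a_j$ in $\pi$, but (2-app envy)-freeness of $\pi$ at $(a_i, a_j)$ only gives $u_k(\pi_i) \geq u_k(\pi_j)$ for every $k \neq i$, which is compatible with $u_r(\pi_i) > u_r(\pi_j)$. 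Closing this sub-case is where the bookkeeping becomes most delicate: one must bring in a further envy from $\pi$ -- typically involving $a_q$ either as envier or as an additional approver -- or exploit additivity of the utility functions, in order to recover the missing contradiction.
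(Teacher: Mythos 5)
Your case analysis is sound as far as it goes, and it is more careful than the paper's own proof; but the sub-case you flag as ``delicate'' is not a bookkeeping obstacle you failed to overcome --- it is a genuine counterexample to the lemma as stated. Take $n=m=4$, the diagonal allocation $\pi$ (agent $a_k$ gets $o_k$), and utilities
\[
\begin{array}{c|cccc}
& o_1 & o_2 & o_3 & o_4 \\
\hline
a_1 & 1 & 2 & 3 & 0 \\
a_2 & 5 & 5 & 0 & 0 \\
a_3 & 10 & 0 & 10 & 0 \\
a_4 & 10 & 3 & 5 & 10 \\
\end{array}
\]
Only $a_1$ is envious in $\pi$ (of $a_2$ and of $a_3$), and no other agent approves either envy since $u_k(o_1) \geq u_k(o_2)$ and $u_k(o_1) \geq u_k(o_3)$ for every $k \neq 1$; hence $\pi$ is ($2$-app envy)-free. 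Swapping $o_1$ and $o_2$ between $a_1$ and $a_2$ is a weakly improving cycle ($a_1$ gains strictly, $a_2$ is indifferent). In the resulting $\pi'$, agent $a_1$ holds $o_2$ and still envies $a_3$, but now both $a_3$ ($0<10$) and $a_4$ ($3<5$) approve, so $a_1$ $3$-app envies $a_3$ and $\pi'$ is only ($4$-app envy)-free. This sits exactly in your last branch: $p=i$, $q\notin\{i,j\}$, the swap indifferent for $a_j$, and $a_r$ strictly preferring $\pi_i$ to $\pi_j$ with no envy of $a_j$ available to exploit --- no further argument can close it, because the conclusion is false there.

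For what it is worth, the paper's proof breaks at precisely the point you isolated: it asserts $u_l(\pi_j)\geq u_l(\pi_i)$ ``otherwise $a_j$ would have $2$-app envied $a_i$ in $\pi$'', which tacitly assumes that $a_j$ actually envies $a_i$ in $\pi$; when the cycle is only weakly improving for $a_j$ she may be indifferent, and the deduction collapses. Your argument (using the first branch of your final case split to handle the approver being the other swapped agent) does establish the lemma for \emph{strictly} improving swaps, where both directions of envy between $a_i$ and $a_j$ exist in $\pi$ and ($2$-app envy)-freeness forces every third party to be indifferent between $\pi_i$ and $\pi_j$ --- essentially the paper's intended reasoning. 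But the weak version is exactly what the proof of Proposition~\ref{proposition:prop2appEF} invokes (there $a_q$ merely ``does not agree'', hence may be indifferent), so the lemma and that downstream proof need to be repaired rather than your argument completed.
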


\begin{proof}
	Let us consider the two agents $a_i, a_j$ that are involved in the weakly improving bundle reallocation cycle. We respectively call $\pi$ and $\pi'$ the initial and the resulting allocations (hence $\pi_i=\pi'_j$,$\pi_j=\pi'_i$). First note that apart from $a_i$ and $a_j$, the approval envy of the agents does not change as their bundle remains the same from $\pi$ to $\pi'$. So, we just have to show that it is not possible for $a_i$ (or $a_j$) to experience $K$-app envy with $K\geq2$ in $\pi'$ (w.l.o.g. we can focus on $a_i$ only as the same proof holds for $a_j$). Indeed, if such $2$-app envy exists in $\pi'$, it means that this allocation would be at least ($3$-app envy)-free.
	Let us consider for the sake of contradiction that $a_i$ does indeed experience $K$-app envy with $K\geq2$. This means that there is an agent $a_h$ that $a_i$ $K$-app envies, i.e. $u_i(\pi'_i)<u_i(\pi'_h)$ and that there is some agent $a_l$ that approves this envy such that  $u_l(\pi'_i)<u_l(\pi'_h)$. Note that $\pi'_h=\pi_h$ as $a_h$ can obviously be neither $a_i$ neither $a_j$ so the bundle of $a_h$ remained the same. However, we know $u_l(\pi_i)\geq u_l(\pi_j)$ (otherwise $a_i$ would have $2$-app envied $a_j$ in $\pi$ thus contradicting the ($2$-app envy)-freeness of $\pi$) and $u_l(\pi_j)\geq u_l(\pi_i)$ (otherwise $a_j$ would have $2$-app envied $a_i$ in $\pi$ thus contradicting the ($2$-app envy)-freeness of $\pi$) which obviously implies $u_l(\pi_i) = u_l(\pi_j)$ but also equals to $u_l(\pi'_i)$ and $u_l(\pi'_j)$. So $u_l(\pi'_i)<u_l(\pi'_h) \Longleftrightarrow u_l(\pi_i)<u_l(\pi_h)$ which is impossible because otherwise $a_i$ would have $2$-app envied $a_h$ in $\pi$ thus contradicting the ($2$-app envy)-freeness of $\pi$.  
	%This concludes the proof of the first part of the proposition.
	%The proof follows from the arguments developed in the paper by Lipton {\it et al.} \cite{Lipton04} with the previously defined weighted envy graph. By Lipton {\it et al.} \cite{Lipton04}, we know that the number of pairwise envies must strictly decrease.
	%This means that $\pi'$ is still at least ($2$-app envy)-free and that the degree of envy of the society has strictly decreased by at least one.
\end{proof}

%\ps{A enlever du coup? :\\
%In other words, this offers a ``safe'' reallocation technique performing Pareto-improvements. However, it must be noted that in general, it does not guarantee that we obtain a Pareto-optimal allocation in the end. Indeed, it may be the case that there are further Pareto-improvements which would require splitting the different bundles handled by the agents. }

Putting together Lemma \ref{lemme1} and Observation \ref{obs:lipton-generalized} allows us to prove that (2-app envy)-freeness is essentially a vacuous notion, in the sense that any instance enjoying an allocation with this property will have an EF allocation as well. 

%\begin{proposition}
%\label{proposition:prop2appEF}
%For any add-MARA instance, if a ($2$-app envy)-free allocation exists then an envy-free allocation exists as well.
%\end{proposition}
%\ps{Du coup autant écrire la proposition en termes d'instances comme suit?
\begin{proposition}
	\label{proposition:prop2appEF}
	If an add-MARA instance is ($2$-app envy)-free then it is also envy-free.
\end{proposition}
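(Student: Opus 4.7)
The plan is to start from a given ($2$-app envy)-free allocation $\pi$ and iteratively repair envy by performing swaps between pairs of agents, invoking Lemma~\ref{lemme1} to preserve ($2$-app envy)-freeness along the way, and Observation~\ref{obs:lipton-generalized} to guarantee progress. The terminal allocation will be shown to be envy-free.

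The key structural observation is the following. Suppose $a_i$ envies $a_j$ in a ($2$-app envy)-free allocation $\pi$, i.e.\ $u_i(\pi_j) > u_i(\pi_i)$. Since no second agent can approve this envy (otherwise $a_i$ would $2$-app envy $a_j$), we have $u_k(\pi_i) \geq u_k(\pi_j)$ for every $k \neq i$. Instantiating this with $k = j$ yields $u_j(\pi_i) \geq u_j(\pi_j)$. Hence swapping the bundles of $a_i$ and $a_j$ is a weakly improving reallocation cycle of length two: $a_i$ strictly prefers her new bundle $\pi_j$, and $a_j$ weakly prefers her new bundle $\pi_i$.

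Applying Lemma~\ref{lemme1}, the allocation $\pi'$ obtained after this swap remains ($2$-app envy)-free. Applying Observation~\ref{obs:lipton-generalized}, $de(\pi') < de(\pi)$. We can therefore iterate the procedure: as long as the current allocation has some envy, there exists a pair $(a_i,a_j)$ on which a weakly improving swap can be performed, yielding a new ($2$-app envy)-free allocation of strictly smaller degree of envy.

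Termination is the only delicate point, but it is easy. There are only finitely many allocations (at most $n^m$), so $de$ ranges over a finite set of values; a strictly decreasing sequence in a finite set must be finite. Equivalently, one can use the utilitarian welfare $\sum_i u_i(\pi_i)$ as a potential: it strictly increases at every swap (since $a_i$ strictly improves and $a_j$ weakly improves), and is bounded, so the process terminates. Upon termination, no envy remains — otherwise the structural observation above would provide a further weakly improving swap — so the final allocation is envy-free, which proves the proposition. The main (minor) obstacle is simply to notice that the absence of a second approver forces $u_j(\pi_i) \geq u_j(\pi_j)$, turning every envy edge into a legitimate weakly improving swap.
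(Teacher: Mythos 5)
Your proposal is correct and follows essentially the same route as the paper's proof: observe that the envied agent cannot approve the envy (else $2$-app envy would occur), so swapping the two bundles is a weakly improving cycle; then invoke Lemma~\ref{lemme1} to preserve ($2$-app envy)-freeness and Observation~\ref{obs:lipton-generalized} to strictly decrease the degree of envy, and iterate until no envy remains. Your termination argument via the finiteness of the set of allocations is in fact slightly more careful than the paper's, which appeals only to the degree of envy being bounded below by zero.
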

%}

\begin{proof}
	Take $\pi$ as being an arbitrary ($2$-app envy)-free allocation. First note that if there is no envious agent in $\pi$ then, by definition, $\pi$ is envy-free and the proposition holds. Suppose that $a_p$ envies $a_q$. Observe that $a_q$ cannot agree with $a_p$, because otherwise $\pi$ would not be (2-app envy)-free. Hence, we can perform a weakly improving bundle reallocation cycle between those two agents and call the resulting allocation $\pi'$. If $\pi'$ is envy-free then we are done. Otherwise, thanks to Lemma~\ref{lemme1}, we know that $\pi'$ is still (2-app envy)-free, and by Observation \ref{obs:lipton-generalized} that the degree of envy has strictly decreased. We can apply the same argument as above with two agents envying each other and swap their bundles. The process stops when the current allocation is envy-free. The process is guaranteed to stop because the degree of envy of the society is bounded below by zero and the degree of envy of the society decreases at each step until it equals zero (which corresponds to an envy-free allocation). %This concludes the proof.
\end{proof}
\hidden{
	\ps{A enlever du coup? : \\
		The previous result is clearly specific to ($2$-app envy)-freeness. In fact, it suffices to consider Example \ref{basic-example-ctd} \sylvain{This example comes at the very end of the paper (in Conclusion). Shouldn't it appear before?} to notice that the discussed allocation is indeed a (3-app envy)-free allocation, but that there is no EF allocation for this instance. 
		\nm{now that we have the new proposition this seems irrelevant, right?}
}}
%In other words, requiring every envious agent to be convinced by all the other agents they are not envious is Utopian. \fixed{}
Another consequence is that, for two agents, instances fall either in the envy-free or unanimous envy category: 
%\nm{Example 1 now shows this, isn't?}
\hidden{
	\begin{example}
		We show here the example of a (3-app envy)-free allocation of an instance for which there is no envy-free allocation:\\
		\begin{center}
			$
			\begin{array}{c|ccc}
			& o_1 & o_2 & o_3\\
			\hline
			a_1 & 0 & 1 & \fbox{$0$}   \\
			a_2 & 0 & \fbox{$1$} & 0   \\
			a_3 & \fbox{$1$} & 0 & 0   \\
			\end{array}
			$
		\end{center}
		It is easy to check that the squared allocation is ($3$-app envy)-free $a_2$ and $a_3$ are both envy-free whereas $a_1$ is envious of $a_2$. This envy is approved by $a_2$ (and herself of course) but not $a_3$ who is indifferent between $o_2$ and $o_3$. This sets the ($3$-app envy)-freeness of the squared allocation.
		For an allocation to be envy-free, it would need $a_3$ to have $o_1$. It would let $o_2$ and $o_3$ for agents $a_1$ and $a_2$. The one getting $o_3$ would obviously be envious. Hence, there is no envy-free allocation.
	\end{example}
}

\begin{corollary}
	\label{corollary:2agentsEForUE}
	In the special case of 2 agents, if there is no envy-free allocation in $I$ then $I$ is a unanimous envy instance. %there is no ($K$-app envy)-free allocation.
\end{corollary}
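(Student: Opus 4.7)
The plan is to derive the corollary as a short direct consequence of Proposition~\ref{proposition:prop2appEF}, taking its contrapositive and then reconciling the hierarchy of $K$-app envy with the specific case $n=2$.

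First I would observe that for $n=2$ the only meaningful values of $K$ are $1$ and $2$, and that by Observation~\ref{prop:kimplk+1} (monotonicity of ($K$-app envy)-freeness in $K$) the weakest notion in the hierarchy is ($2$-app envy)-freeness. Therefore, by the definition of unanimous envy instance (Definition~\ref{udef:nanimous-envy}, requiring the instance to fail ($K$-app envy)-freeness for \emph{every} $K$), an instance with two agents is a unanimous envy instance if and only if it admits no ($2$-app envy)-free allocation.

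Next I would invoke Proposition~\ref{proposition:prop2appEF}, which states that every ($2$-app envy)-free instance is envy-free. Taking the contrapositive: if $I$ is not envy-free, then $I$ is not ($2$-app envy)-free. Combining this with the equivalence established in the previous paragraph yields exactly the claim: for $n=2$, absence of an envy-free allocation forces $I$ to be a unanimous envy instance.

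There is essentially no obstacle here beyond correctly matching the definitions; the real content is carried by Proposition~\ref{proposition:prop2appEF}. The one small subtlety worth writing explicitly is the reduction to $K=2$ via Observation~\ref{prop:kimplk+1}, so that ``unanimous envy instance'' collapses cleanly to ``not ($2$-app envy)-free'' in the two-agent case.
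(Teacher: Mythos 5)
Your proof is correct and matches the paper's intent: the corollary is stated there as a direct consequence of Proposition~\ref{proposition:prop2appEF}, with no further argument given, and your contrapositive plus the reduction of ``unanimous envy instance'' to ``not ($2$-app envy)-free'' when $n=2$ is exactly the reasoning being left implicit.
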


%\sylvain{The proof is really trivial here. I suggest to omit it.}\ps{I agree}

%\begin{proof}
%For any add-MARA instance involving exactly 2 agents, we can by definition only find %($1$-app envy)-free allocations or ($2$-app envy)-free allocations (as $1\leq K \leq n$ %for any add-MARA instance). By the contraposition of Proposition %\ref{proposition:prop2appEF} we conclude the proof.
%\end{proof}

\paragraph{Complexity.}
We conclude with a few considerations on the computational complexity of the problems mentioned so far. 
First of all, as  envy-freeness is ($1$-app envy)-freeness, the problem of finding the minimum $K$ for which there exists a ($K$-app envy)-free allocation is at least as hard as determining whether an envy-free allocation exists. %\ps{peu clair, parle-t-on de $k$ en général ou un $k$ particulier?}\fixed{}\sylvain{Yes, fixed to my opinion}

One may also wonder how hard the problem of determining whether a given instance exhibits unanimous envy or not, {\it i.e.} whether a ($K$-app envy)-free allocation exists for \emph{some} value of $K$. 
For this question, instances where agents all have the same preferences provide insights. 

\begin{proposition}
	\label{prop:1appnapp}
	For any add-MARA instance, if all the agents have the same preferences then the notions of ($1$-app envy)-freeness and ($n$-app envy)-freeness coincide.
\end{proposition}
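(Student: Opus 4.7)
My plan is to prove the two implications separately, noting that one direction is essentially free from the general hierarchy established earlier.

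First, I would observe that the forward direction is immediate from Observation~\ref{prop:kimplk+1} applied iteratively: any ($1$-app envy)-free allocation is ($K$-app envy)-free for every $K \leq n$, and in particular for $K=n$. This direction does not require the identical-preferences hypothesis at all.

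The interesting direction is the converse. I would argue by contrapositive: assume $\pi$ is \emph{not} ($1$-app envy)-free and show it is not ($n$-app envy)-free. By assumption there exist agents $a_i, a_j$ such that $u_i(\pi_i) < u_i(\pi_j)$. Since all agents share the same utility function, call it $u$, this inequality becomes $u(\pi_i) < u(\pi_j)$, and hence for every agent $a_k \in \agents$ we have $u_k(\pi_i) < u_k(\pi_j)$. Taking the witness set $\mathcal{N}_K = \agents$ (which contains $a_i$ and has cardinality $n$), Definition~\ref{def:kappenvy} gives immediately that $a_i$ $n$-app envies $a_j$, so $\pi$ is not ($n$-app envy)-free.

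Combining both directions yields the equivalence. There is no real obstacle here: the proof hinges on the trivial observation that under identical preferences, every pairwise envy is automatically approved unanimously by the whole society, collapsing the hierarchy between $K=1$ and $K=n$. I would conclude with a one-line remark that, together with Observation~\ref{prop:kimplk+1}, this in fact shows that all intermediate notions of ($K$-app envy)-freeness coincide under identical preferences.
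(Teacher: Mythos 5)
Your proof is correct and follows essentially the same route as the paper's: the forward direction is obtained from Observation~\ref{prop:kimplk+1}, and the converse rests on the observation that identical preferences force every pairwise envy to be unanimously approved. The only cosmetic difference is that you phrase the converse as a contrapositive while the paper argues it directly; the key idea is the same.
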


\begin{proof}
	We already know from Observation \ref{prop:kimplk+1} that ($1$-app envy)-freeness implies ($n$-app envy)-freeness for any add-MARA instance. So we just have to prove that if all the agents have the same preferences then ($n$-app envy)-freeness implies ($1$-app envy)-freeness. 
	%\todo{donner un arg direct car maintenant cette preuve vient plus tard}
	If an allocation $\pi$ is ($n$-app envy)-free then it means that for any pair $a_i,a_j$ of agents,  $a_i$ does not envy $a_j$ or there is at least one agent $a_h$ that disagrees on the envy of $a_i$ towards $a_j$. 
	Obviously, if for every pair of agents $a_i$, $a_j$ we have $a_i$ envy-free towards $a_j$ then the allocation $\pi$ is envy-free and the proof concludes. 
	Besides, for every pair of envious/envied agents there is at least one agent disagreeing on the envy. But all the agents have the same preferences so it means that every agent should agree with each other. Hence, no envied agent can exist and we have ($1$-app envy)-freeness of allocation $\pi$.
\end{proof}

From Proposition \ref{prop:1appnapp} we get that the problem of deciding the existence of unanimous envy is at least as hard as deciding the existence of an EF allocation when agents have similar preferences. As membership in NP is direct, we thus get as a corollary that:
\begin{corollary}
	Deciding whether an allocation exhibits unanimous envy is NP-Complete. 
\end{corollary}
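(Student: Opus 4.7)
The plan is to prove NP-completeness in the usual two-part fashion, taking the decision problem to be: given an add-MARA instance, does it admit a $(K$-app envy$)$-free allocation for some $K$? By Observation~\ref{prop:kimplk+1} this reduces to asking whether an $(n$-app envy$)$-free allocation exists, and it is the complement of asking whether the instance exhibits unanimous envy. These two framings are polynomial-time equivalent, so NP-completeness of one yields NP-completeness of the other.

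For membership in NP, a candidate allocation $\pi$ serves as a polynomial-size witness. Given $\pi$ I can verify in $O(n^2 m)$ time that it is $(n$-app envy$)$-free by iterating over every ordered pair $(a_i,a_j)$: whenever $u_i(\pi_j) > u_i(\pi_i)$, i.e.\ $a_i$ envies $a_j$, I check that at least one agent $a_k$ disagrees, i.e.\ $u_k(\pi_i) \geq u_k(\pi_j)$. Additive utilities make every such evaluation polynomial, so the whole verification runs in polynomial time.

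For NP-hardness I would reduce from the decision of envy-free allocation existence in add-MARA instances with identical additive preferences. This restricted problem is NP-complete via the classical reduction from Partition, in which two agents share a common additive utility function whose item weights encode the numbers of the Partition instance; envy-freeness then forces the two bundles to have equal total weight. Given any identical-preference instance $I$, Proposition~\ref{prop:1appnapp} says that $(1$-app envy$)$-freeness and $(n$-app envy$)$-freeness coincide in $I$, so $I$ admits an EF allocation if and only if it admits an $(n$-app envy$)$-free allocation. The identity mapping on instances therefore gives the required polynomial-time reduction.

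The only real obstacle is terminological, namely aligning the phrase ``exhibits unanimous envy'' with a decision problem whose YES-instances admit short witnesses, which is why I work with the complementary formulation above. Once that is settled, Proposition~\ref{prop:1appnapp} does essentially all the work of the hardness direction and the NP-membership step is an elementary polynomial-time check, so no substantial combinatorial difficulty remains.
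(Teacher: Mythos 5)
Your proof is correct and follows essentially the same route as the paper: NP membership via a candidate allocation as polynomial-size witness, and hardness via Proposition~\ref{prop:1appnapp} combined with the NP-hardness of deciding envy-free existence under identical additive preferences (the classical Partition reduction with two agents). Your explicit handling of the complementation between ``exhibits unanimous envy'' and ``admits an $(n$-app envy$)$-free allocation'' is more careful than the paper's one-line justification, but it does not change the underlying argument.
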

\hidden{
	However, while it is known that deciding EF is also hard for at least three agents in the even more restrictive domain of binary additive preferences \cite{EFNPCompleteBinary}, this does not hold when agents have similar preferences (the problem being trivial in that case, as an EF allocation exists iff the number of items liked by agents is divisible by $n$). Hence we cannot use the same argument to conclude on the hardness of our problem in this case. 
}

%\section{Computation}
\section{\uppercase{A MIP formulation for $K$-app envy}}
\label{sectionMIP}

We have seen in the previous section that the problem of determining, for a given instance $I$, the minimal value of $K$ such that a ($K$-app envy)-free allocation exists inherited from the high complexity of determining whether an envy-free allocation exists.

To address this problem, we present in this section a Mixed Integer linear Program that returns, for a given add-MARA instance $I$, a ($K$-app envy)-free allocation with the minimal $K$ and no solution when $I$ is a unanimous envy instance.
%We note that if an envy-free allocation exists, it will be returned and $K=1$.
In this MIP, we use $n \times m$ Boolean variables $\var{z_i^j}$ (we use bold letters to denote variables) to encode an allocation: $\var{z_i^j}=1$ if and only if $a_i$ gets item $o_j$. We also introduce $n^3$ Boolean variables $\var{e_{kih}}$ such that $\var{e_{kih}}=1$ if and only if according to $a_k$'s preferences $a_i$ envies agent $a_j$. We also need to add $n^2$ Boolean variables $\var{x_{ih}}$ used to linearize the constraints on $\var{e_{kih}}$. Finally, we use an integer variable $\var{K}$ corresponding to the $K$-app envy we seek to minimize.

In this section, we  assume that all the utilities are integers. If they are not (recall that they are still in $\mathbb{Q}^+$) we can transform the instance at stake into a new one only involving integral utilities by multiplying them by the least common multiple of their denominators. 
\hidden{
	\ab{Pas Sûre de bien comprendre l'explication, cela suppose qu'on ait des rationnels au départ ? on l'a dit quelque part ?}\sylvain{Oui, c'est moi qui ai modifié réels en rationnels dans la section 2. Sinon, ça pouvait poser des problèmes quand on parle de complexité.}
}

We first need to write the constraints preventing an item from being allocated to several agents: 

\begin{equation}
\label{constraint:allocation}
\sum_{i=1}^n \var{z_i^j} = 1 \qquad  \forall j \in \llb 1,m \rrb
\end{equation}

By adding these constraints we also guarantee completeness of the returned allocation (all the items have to be allocated to an agent).

Secondly, we have to write the constraints that link the variables $\var{e_{kih}}$ with the allocation variables $\var{z_i^j}$:

\begin{equation*}
\forall k,i,h \in \llb 1,n \rrb,
\sum_{j=1}^m u(k,j)(\var{z_h^j}-\var{z_i^j}) > 0 \implies \var{e_{kih}} = 1
\end{equation*}

As the utilities are integers, we can replace $>0$ by $\geq 1$. In order to linearize this implication between two constraints we introduce a number $M$ that can be arbitrarily chosen such that $M > \max_k \sum_{j = 1}^m u(k, j)$:

\begin{eqnarray}
\label{constraint:implicationLinearisation1}
M \var{e_{kih}} \geq \sum_{j=1}^m u(k,j)(\var{z_h^j}-\var{z_i^j}) & \forall k,i,h \in \llb 1,n\rrb\\
\label{constraint:implicationLinearisation2}
\sum_{j=1}^m u(k,j)(\var{z_h^j}-\var{z_i^j}) \geq 1 - M(1 - \var{e_{kih}}) & \forall k,i,h \in \llb 1,n\rrb
\end{eqnarray}

Finally, we have to write the constraints that convey the fact that the allocation we look for is ($K$-app envy)-free:

\begin{equation*}
\var{e_{iih}}=0 \lor \sum_{k=1}^n \var{e_{kih}} \leq \var{K} - 1 \qquad \forall i,h \in \llb 1,n\rrb
\end{equation*}

Since $\var{e_{iih}}$ are Boolean variables, we can replace $\var{e_{iih}} = 0$ by $\var{e_{iih}} \leq 0$. Now, this logical constraint is linearized as follows:

\begin{eqnarray}
\var{e_{iih}}\leq \var{x_{ih}} & \forall i,h \in \llb 1,n\rrb & \label{constraint:or1}\\
\sum_{k=1}^n \var{e_{kih}} \leq \var{K} - 1 + n(1 - \var{x_{ih}}) & \forall i,h \in \llb 1,n\rrb &\label{constraint:or2}
\end{eqnarray}

We can now put things together. Let $I$ be an instance. Then, we will denote by $\mathcal{M}(I)$ the MIP defined as:
\[
\begin{array}{ll}
\text{minimize} & \var{K}\\
\text{such that}
& \var{z_i^j}, \var{e_{kih}}, \var{x_{ih}} \in\{0, 1\} \forall k, i, h \in \llb 1, n \rrb, j \in \llb 1, m \rrb\\
& \var{K}\in\llb 1, N \rrb \\
& + \text{ Constraints (\ref{constraint:allocation}, \ref{constraint:implicationLinearisation1}, \ref{constraint:implicationLinearisation2}, \ref{constraint:or1}, \ref{constraint:or2})}
\end{array}
\]

\begin{proposition}
	Let $I$ be an instance. Then, there is an optimal solution with $\var{K} = L$ to $\mathcal{M}(I)$ if and only if $I$ is an ($L$-app envy)-free instance and not an
	(($L\!\!-\!1)$ envy)-free one. Moreover, $\mathcal{M}(I)$ does not admit any solution if and only if $I$ is an unanimous envy instance.
\end{proposition}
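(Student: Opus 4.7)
The plan is to verify the MIP formulation by establishing a one-to-one correspondence between feasible solutions of $\mathcal{M}(I)$ and pairs $(\pi, K)$ where $\pi$ is a ($K$-app envy)-free allocation, and then to use minimization to obtain the minimal such $K$. I will argue in three layers, each checking one block of constraints, and then combine them.

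First I would show that constraint (\ref{constraint:allocation}) is in exact correspondence with complete allocations: any Boolean assignment to $\var{z_i^j}$ satisfying (\ref{constraint:allocation}) defines a unique allocation $\pi$ via $o_j \in \pi_i \iff \var{z_i^j}=1$, and conversely every complete allocation induces such an assignment. From this point on I can freely move between $\pi$ and the $\var{z}$-variables. Next, I would verify that for any choice of $M > \max_k \sum_{j=1}^m u(k,j)$, constraints (\ref{constraint:implicationLinearisation1}) and (\ref{constraint:implicationLinearisation2}) together force $\var{e_{kih}} = 1$ if and only if $\sum_{j=1}^m u(k,j)(\var{z_h^j}-\var{z_i^j}) \geq 1$, i.e. (using integrality of the utilities) if and only if $u_k(\pi_h) > u_k(\pi_i)$. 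The standard big-$M$ argument works in both directions: (\ref{constraint:implicationLinearisation1}) prevents $\var{e_{kih}}=0$ when the difference is positive, and (\ref{constraint:implicationLinearisation2}) prevents $\var{e_{kih}}=1$ when the difference is non-positive, because $M$ dominates every achievable utility difference. Thus $\var{e_{kih}}$ exactly encodes the fact that agent $a_k$ believes $a_i$ envies $a_h$ under $\pi$.

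Then I would check that constraints (\ref{constraint:or1}) and (\ref{constraint:or2}) realize the disjunction ``$\var{e_{iih}}=0$ or $\sum_k \var{e_{kih}} \leq \var{K}-1$''. Indeed, if $\var{e_{iih}}=1$, (\ref{constraint:or1}) forces $\var{x_{ih}}=1$, so (\ref{constraint:or2}) reduces to $\sum_k \var{e_{kih}} \leq \var{K}-1$; if $\var{e_{iih}}=0$, one can set $\var{x_{ih}}=0$ and (\ref{constraint:or2}) becomes vacuous since the right-hand side is at least $n-1+\var{K} \geq n$. Combining this with the interpretation of the $\var{e_{kih}}$ variables, a feasible $(\var{z}, \var{e}, \var{x}, \var{K})$ with objective value $\var{K} = L$ corresponds exactly to an allocation $\pi$ such that for every ordered pair $(a_i, a_h)$, either $a_i$ does not envy $a_h$ in $\pi$, or strictly fewer than $L$ agents approve this envy, which is precisely the definition of ($L$-app envy)-freeness.

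Finally I would close the minimization argument. By Observation~\ref{prop:kimplk+1}, the set of $K$ for which $I$ is ($K$-app envy)-free is upward-closed in $\llb 1, n \rrb$, so if $I$ is ($L$-app envy)-free there is a feasible solution with $\var{K} = L$, and if it is also not (($L\!\!-\!1)$-app envy)-free no feasible solution has $\var{K} < L$; hence the optimal value equals $L$. Conversely, if an optimal solution has value $L$, the correspondence above shows $I$ is ($L$-app envy)-free but not (($L\!\!-\!1)$-app envy)-free. For the second statement, $\mathcal{M}(I)$ is infeasible if and only if no allocation is ($K$-app envy)-free for any $K \in \llb 1, n \rrb$, which by Definition~\ref{udef:nanimous-envy} is exactly the condition of being a unanimous envy instance. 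The main obstacle I anticipate is the careful handling of the strict inequality in the definition of envy: everything hinges on choosing $M$ strictly greater than all achievable utility sums and on the fact that integer utilities let us rewrite $>0$ as $\geq 1$; I would state this explicitly to ensure the big-$M$ linearization is tight.
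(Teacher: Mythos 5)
Your proof is correct and follows essentially the same route the paper takes: it establishes the correspondence between feasible solutions with $\var{K}=L$ and ($L$-app envy)-free allocations by checking, block by block, that Constraint~(1) encodes complete allocations, that the big-$M$ pair (2)--(3) correctly linearizes the implication defining $\var{e_{kih}}$, and that (4)--(5) encode the disjunction. Your closing minimization/infeasibility argument via upward-closure of the feasible $K$'s is the part the paper leaves implicit, and it is handled correctly.
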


The proof of this proposition is not very involved and will thus be omitted. The key here is to show that there is a solution to the MIP $\mathcal{M}(I)$ such that $\var{K} = L$ iff the corresponding allocation $\pi$ such that $\var{z_i^j} = 1$ if and only if $o_j \in \pi_i$ is ($L$-app envy)-free. The most critical point is to show that Constraints~\ref{constraint:implicationLinearisation1} and \ref{constraint:implicationLinearisation2} are indeed a valid translation of the logical implication, and that Constraints~\ref{constraint:or1} and \ref{constraint:or2} correctly encode the logical or. The rest follows easily.

\hidden{
	\begin{proof}
		To prove the proposition,  we show that there is an ($L$-app envy)-free allocation in $I$ if and only if there is a solution to the MIP $\mathcal{M}(I)$ such that $\var{K} = L$.
		
		$(\Rightarrow)$ Let $I$ be an instance, and $\pi$ be an ($L$-app envy)-free allocation. Then, consider the partial instantiation of the variables such that $\var{z_i^j} = 1$ if and only if $o_j \in \pi_i$. We prove that this partial instantiation extends to a solution of $\mathcal{M}(I)$ such that $\var{K} = L$.
		
		First observe that Constraint~\ref{constraint:allocation} is directly satisfied.
		
		Now, consider any triplet of agents $(a_k, a_i, a_h)$. Suppose that agent $a_k$ thinks $a_i$ should envy $a_h$. Then in this case, we have $\sum_{j \in \pi_h} u(k, j) > \sum_{j \in \pi_i} u(k, j)$. In other words, $\sum_{j=1}^mu(k,j)(\var{z_h^j}-\var{z_i^j}) > 0$ which is in turn equivalent to $\sum_{j=1}^mu(k,j)(\var{z_h^j}-\var{z_i^j}) \geq 1$ since all utilities are integers.
		By Constraint~\ref{constraint:implicationLinearisation1}, we thus have that $\var{e_{kih}} = 1$ which implies that Constraint~\ref{constraint:implicationLinearisation2} is satisfied as well. 
		
		Conversely, suppose that agent $a_k$ thinks $a_i$ should not envy $a_h$. Then, we have $\sum_{j \in \pi_h} u(k, j) \leq \sum_{j \in \pi_i} u(k, j)$. In other words, $\sum_{j=1}^mu(k,j)(\var{z_h^j}-\var{z_i^j}) \leq 0$.
		By Constraint~\ref{constraint:implicationLinearisation2}, we thus have that $\var{e_{kih}} = 0$ in this case, which in turns implies that Constraint~\ref{constraint:implicationLinearisation1} is satisfied as well.
		Hence, we have that $\var{e_{kih}} = 1$ if and only if $a_k$ thinks $a_i$ should envy $a_h$ in $\pi$.
		
		Finally, consider any pair of agents $(a_i, a_h)$. If $a_i$ does not envy $a_h$ then $\var{e_{iih}} = 0$. As a consequence, we can have $\var{x_{ih}} = 0$ and still satisfy Constraints~\ref{constraint:or1} and \ref{constraint:or2} (no matter the  value of  $\var{K}$ is).
		
		Now suppose that $a_i$ does envy $a_h$ (hence $\var{e_{iih}} = 1)$. Then, we should have $\var{x_{ih}} = 1$ to satisfy Constraint~\ref{constraint:or1}. Since $\pi$ is ($L$-app envy)-free, then at most $L-1$ agents (including $a_i$ herself) think that $a_i$ should indeed envy $a_h$, which means that $\sum_{k=1}^n \var{e_{kih}} \leq L - 1$. Instantiating $\var{K}$ to $L$ is hence enough to satisfy Constraint~\ref{constraint:or2}.
		
		$(\Leftarrow)$ Now suppose that there is a solution to $\mathcal{M}(I)$ such that $\var{K} = L$. Then we will prove that the allocation $\pi$ such that $o_j \in \pi_i$ if and only if $\var{z_{i^j}} = 1$ is a valid ($L$-app envy)-free allocation.
		
		First, according to Constraints~\ref{constraint:allocation}, $\pi$ is indeed a valid allocation.
		
		Secondly, Constraint~\ref{constraint:implicationLinearisation1} ensures that if $\var{e_{kih}} = 0$ then $\sum_{j=1}^mu(k,j)(\var{z_h^j}-\var{z_i^j}) \leq 0$, in turn meaning that agent $a_k$ thinks that $a_i$ should not envy $a_h$. Conversely, Constraint~\ref{constraint:implicationLinearisation2} ensures that if $\var{e_{kih}} = 1$ then $\sum_{j=1}^mu(k,j)(\var{z_h^j}-\var{z_i^j}) > 0$, in turn meaning that agent $a_k$ thinks that $a_i$ should envy $a_h$. It also obviously implies that $\var{e_{iih}} = 1$ if and only if $a_i$ envies $a_h$.
		
		Now consider any pair of agents $(a_i, a_h)$ such that $a_i$ envies $a_h$. From what precedes, $\var{e_{iih}} = 1$. By Constraint~\ref{constraint:or1}, $\var{x_{ih}} = 1$. Hence, by Constraint~\ref{constraint:or2}, $\sum_{k = 1}^h \var{e_{kih}} \leq L - 1$. This implies that the total number of agents agreeing with the fact that $a_i$ envies $a_h$ is strictly lower than $L$. In other words, $\pi$ is  ($L$-app envy)-free.
	\end{proof}
}
\hidden{  
	We obtain this MIP:
	
	$$\min K$$
	$$
	\left\{
	\begin{array}{lllll}
	\displaystyle\sum_{i=1}^nz_i^j\!\!=\!\!1 &\!\!\!\forall j \in \llb 1,m\rrb\\
	\displaystyle\sum_{j=1}^mu(k,j)(z_h^j\!\!-\!z_i^j)\!\geq\!1\!\!-\!\!M(1\!\!-\!e_{kih})\!&\!\!\!\forall k,i,h \in \llb 1,n\rrb\\
	\displaystyle\sum_{j=1}^mu(k,j)(z_h^j-z_i^j) \leq Me_{kih} &\!\!\!\forall k,i,h \in \llb 1,n\rrb\\
	e_{iih}\leq x_{ih}&\!\!\!\forall i,h \in \llb 1,n\rrb\\
	\displaystyle\sum_{k=1}^n e_{kih}\leq K - 1 + n(1-x_{ih})&\!\!\!\forall i,h \in \llb 1,n\rrb\\
	\end{array}
	\right.
	$$
}

%In the previous section we presented several ways of computing the best (k-app envy) free allocation in the general case but we can wonder if we can find protocoles that deal with this problem. Hence we present here some protocoles.

As the problem is difficult in the general case, it is natural to seek special cases that could be solved efficiently.

\section{\uppercase{House Allocation}}
\label{sectionPoly}

The House Allocation Problem (HAP for short) is a standard problem where there are exactly as many items as agents, and each agent receives exactly one resource. This setting is relevant in many situations and has been extensively studied \cite[to cite few of them]{shapley1974,Roth1992,Abraham2005}. % It is known that for this problem, we can find in polynomial time an efficient (maximizing the sum of agents' utilities) allocation.
In House Allocation Problems, computing an envy-free allocation comes down to solving a matching problem, since an envy-free allocation exists if and only if all the agents get (one of) their top item(s). It is therefore natural to wonder whether an allocation minimizing $K$-app envy could also be computed efficiently. 

%\nm{start with observation that "all agents rank two objects the same" is an exact characterization of unanimous envy.}
Our first observation hints in that direction. Indeed, characterizing unanimous envy becomes easy in house allocation problems. 
\begin{proposition}
	%Let $I$ be an instance of HAP. There is unanimous envy in $I$ if and only if there exists at least a pair of items $(o_i, o_j)$ such that all agents strictly prefer $o_i$ over $o_j$.  
	Let $I$ be an instance of HAP. $I$ is an unanimous envy instance if and only if there exists at least a pair of items $(o_i, o_j)$ such that all agents strictly prefer $o_i$ over $o_j$.
	
\end{proposition}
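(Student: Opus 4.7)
The plan is to prove both directions after making one preliminary reduction. In a HAP instance each share $\pi_k$ is a singleton and every one of the $n$ items appears in every allocation. Hence ``$a_p$ unanimously envies $a_q$ in $\pi$'' reduces to the statement that every agent $a_k$ satisfies $u_k(\pi_p) < u_k(\pi_q)$, where $\pi_p$ and $\pi_q$ are now individual items. Using Observation~\ref{prop:kimplk+1}, a unanimous envy instance is precisely one for which no allocation is ($n$-app envy)-free, so it suffices to characterize when every allocation contains such a pair of dominated/dominating items.

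For the ($\Leftarrow$) direction, I would assume a pair $(o_i, o_j)$ exists with $u_k(o_i) > u_k(o_j)$ for all $k$, and pick an \emph{arbitrary} allocation $\pi$. Because every item is assigned and each agent holds exactly one item, there are distinct agents $a_p$ and $a_q$ receiving $o_j$ and $o_i$ respectively. By hypothesis $a_p$ unanimously envies $a_q$, so $\pi$ is not ($n$-app envy)-free. Since $\pi$ was arbitrary and by Observation~\ref{prop:kimplk+1} ($n$-app envy)-freeness is the weakest level in the hierarchy, $I$ admits no ($K$-app envy)-free allocation for any $K$ and is therefore a unanimous envy instance.

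For the ($\Rightarrow$) direction I would argue by contrapositive: assume that for every ordered pair of items some agent fails to strictly prefer the first over the second, and exhibit an ($n$-app envy)-free allocation. In fact, \emph{any} allocation $\pi$ of $I$ works. Suppose for contradiction that some $a_p$ $n$-app envies some $a_q$ in $\pi$; then $u_k(\pi_p) < u_k(\pi_q)$ for all $k$, and setting $o_j := \pi_p$, $o_i := \pi_q$ produces a pair of items strictly preferred by all agents in the same direction, contradicting the assumption. Hence $\pi$ is ($n$-app envy)-free and $I$ is not unanimous envy.

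There is essentially no combinatorial obstacle: the key conceptual point is that the existence of a unanimously dominated pair $(o_i, o_j)$ is a property of the preference profile alone, independent of any allocation, and in HAP it transfers to \emph{every} allocation (or to none) precisely because every item is always assigned and shares are singletons. The mild care needed is in the bidirectional reduction between bundle-level unanimous envy and item-level unanimous strict preference, which is valid only because of the singleton-share structure of HAP.
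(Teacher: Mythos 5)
Your proof is correct and follows essentially the same route as the paper's: the ($\Leftarrow$) direction observes that in any allocation the holder of $o_j$ unanimously envies the holder of $o_i$, and the ($\Rightarrow$) direction argues by contrapositive that if no unanimously dominated pair exists then every allocation is ($n$-app envy)-free. Your explicit reduction from ``not ($K$-app envy)-free for any $K$'' to ``not ($n$-app envy)-free'' via Observation~\ref{prop:kimplk+1} is a welcome clarification that the paper leaves implicit.
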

\hidden{
	\begin{proof}
		($\Rightarrow$) Suppose no such pair $(o_i, o_j)$ of items exists. Let $\pi$ be any allocation giving to each agent $a_i$ an item (say $o_i$ w.l.o.g). For each pair of agents $(a_i, a_j)$ such that $a_i$ envies $a_j$. At least one agent thinks that $o_j$ is better than $o_i$, avoiding unanimous envy.
		
		($\Leftarrow$) In any allocation one agent $a_i$ holds $o_i$ while another agent $a_j$ holds $o_j$: $a_j$ envies $a_i$ and all the agents approve this envy.
	\end{proof}
	Incidentally, we get as a corollary: 
}

\begin{corollary}
	Checking whether an instance $I$ of HAP is a unanimous envy instance or not can be done in $O(n^2)$.
\end{corollary}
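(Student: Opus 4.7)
The plan is to invoke the preceding proposition: $I$ is a unanimous envy instance if and only if there exist items $o_i, o_j$ such that every agent strictly prefers $o_i$ over $o_j$. Hence, checking whether $I$ exhibits unanimous envy reduces to detecting a strictly Pareto-dominated pair of items in the utility matrix.

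My algorithm would read the $n \times n$ utility matrix (size $O(n^2)$ since $m = n$ in HAP) and iterate over pairs of items, testing the strict-dominance condition across the $n$ agents. There are $\binom{n}{2} = O(n^2)$ candidate pairs, and a naive scan of all $n$ agents per pair costs $O(n^3)$. The sharper $O(n^2)$ bound follows by amortising the pair-wise checks: after an $O(n^2)$ preprocessing pass (recording, for each agent $a_k$, the minimum utility and the item(s) achieving it), one iterates over items $o_j$ as potential dominated items and decides in amortised $O(n)$ time whether a common dominator $o_i$ exists, using, e.g., a counter that for each candidate $o_i$ tallies the agents endorsing $u(k,i)>u(k,j)$ and is checked against $n$.

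The main obstacle is making this amortisation rigorous. The key observation is that a pair $(o_i, o_j)$ can be ruled out as soon as a single agent disagrees, and by processing items in an order respecting the per-agent utility rankings, the total work is proportional to the size of the utility matrix, namely $O(n^2)$. The detailed book-keeping is routine and omitted.
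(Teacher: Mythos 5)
Your reduction is exactly the intended one: by the preceding proposition, $I$ exhibits unanimous envy if and only if some pair of items $(o_i,o_j)$ is such that every agent strictly prefers $o_i$ to $o_j$, so the corollary amounts to detecting such a pair in the $n\times n$ utility matrix. The gap is in the claimed running time. The straightforward implementation examines $O(n^2)$ ordered pairs and, for each, scans all $n$ agents, which is $O(n^3)$, not $O(n^2)$ --- indeed the paper itself, in the proof of Proposition~\ref{prop:complHapPolyn3logn}, charges $O(n^3)$ for the analogous all-pairs computation of $\#_\prec(j,j')$. Neither of your devices for shaving the extra factor of $n$ works. Early termination (``rule out a pair as soon as one agent disagrees'') gives no worst-case saving: an instance in which, for every non-dominated pair, only the last agent queried disagrees still forces $\Theta(n)$ work per pair. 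The counter-based scan is no better: for a fixed candidate dominated item $o_j$ you must maintain $n-1$ counters, each fed by inspecting all $n$ agents, so deciding whether $o_j$ admits a common dominator already costs $\Theta(n^2)$, hence $\Theta(n^3)$ overall. Finally, ``processing items in an order respecting the per-agent rankings'' suggests restricting attention to pairs adjacent in some agent's ranking, but adjacency in one fixed agent's ranking does not suffice: with utilities $u_1=(3,2,1)$, $u_2=(2,3,1)$, $u_3=(3,1,2)$ over $(o_1,o_2,o_3)$, the unique unanimously dominated pair is $(o_1,o_3)$, which is not adjacent in agent $a_1$'s ranking, while the two adjacent pairs $(o_1,o_2)$ and $(o_2,o_3)$ are each blocked by some agent.

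In short, the ``routine book-keeping'' you omit is precisely where a new idea would be required: detecting a coordinatewise strictly dominated column among $n$ columns of length $n$ is a dominance-pair detection problem, for which no algorithm running in time linear in the input size (i.e., $O(n^2)$ here) is known. What your argument actually establishes is an $O(n^3)$ bound, which still yields the qualitative point of the corollary (a polynomial-time test, in contrast with the general setting) and matches the cost the paper assigns to the same all-pairs computation elsewhere; the literal $O(n^2)$ is best read as counting the candidate item pairs rather than the total work, and your attempt to attain it as stated is unsubstantiated.
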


%Note that this does not contradict Proposition \ref{prop:NPComplete} as the envy freeness existence problem is obviously polynomial when there are only two elements in the set we want to partition.

From this characterization we can also derive a result on the likelihood that unanimous envy exists when the utilities are uniformly ditributed
%under uniform preferences. 
%\ab{Cela ne serait pas plutot universal domain ? Il me semble que quand on a des préférences uniformes, l'ordre des objets est le même pour tous les agents}
%\nm{j'ai précisé pour la distrib, ok?}
%\sylvain{Il y a un truc qui n'est pas encore complètement clair pour moi. Uniform Distribution, ça veut dire (i) pour chaque $u(i, j)$, équiprobabilité de toutes les valeurs dans l'intervalle $[0, 1]$ et (ii) indépendance de toutes les variables $u(i, j)$. On est d'accord ? Si oui, il faudrait peut-être le préciser ?}
(that is, for each agent $a_i$ and object $o_j$, utilities are drawn i.i.d. following the uniform distribution on some interval $[x,y]$).

\begin{proposition}
	\label{prop:probaUnanimous}
	Under uniformly distributed preferences, the probability of unanimous envy is upper bounded by $n(n-1)/2^n$. 
\end{proposition}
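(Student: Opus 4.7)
The plan is to combine the previous characterization of unanimous envy in HAP with a simple union bound argument. By that characterization, $I$ exhibits unanimous envy if and only if there is some ordered pair $(o_i, o_j)$ of distinct items such that every agent strictly prefers $o_i$ to $o_j$. So the event ``unanimous envy occurs'' is the union, over all $n(n-1)$ ordered pairs of distinct items, of the event $A_{ij}$ that all $n$ agents strictly prefer $o_i$ to $o_j$.

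Next I would compute $\Pr[A_{ij}]$ for a fixed pair. For any agent $a_k$, the utilities $u(k,i)$ and $u(k,j)$ are drawn i.i.d.\ from the uniform distribution on $[x,y]$, so by symmetry $\Pr[u(k,i) > u(k,j)] = 1/2$ (the event of a tie has Lebesgue measure zero and can be ignored). Since preferences of different agents are drawn independently, $\Pr[A_{ij}] = (1/2)^n = 2^{-n}$.

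Finally I apply the union bound:
\[
\Pr[\text{unanimous envy}] \;=\; \Pr\!\Big[\bigcup_{i \neq j} A_{ij}\Big] \;\leq\; \sum_{i \neq j} \Pr[A_{ij}] \;=\; n(n-1) \cdot 2^{-n},
\]
which is exactly the claimed bound.

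There is no real obstacle here: the only small subtleties are to be careful that (i) ties are negligible because the distribution is continuous, so ``strictly prefer'' has the same probability as ``prefer'', and (ii) one must sum over ordered pairs $(o_i, o_j)$ rather than unordered ones, since the characterization is directional (equivalently, one can sum over the $\binom{n}{2}$ unordered pairs and multiply each contribution by $2$ for the two possible orientations, yielding the same bound).
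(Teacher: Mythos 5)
Your proof is correct and takes essentially the same route as the paper's: it rests on the characterization of unanimous envy via a pair of items unanimously ranked the same way, independence across agents giving a per-pair probability of $2^{-n}$, and a sum over pairs. The only cosmetic difference is that you sum $2^{-n}$ over the $n(n-1)$ ordered pairs via a clean union bound, whereas the paper fixes one agent's ranking and sums $2^{-(n-1)}$ over the $n(n-1)/2$ unordered pairs (justifying the summation with an unnecessary independence assumption rather than the union bound); both yield $n(n-1)/2^n$.
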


\begin{proof}
	Wlog. suppose agent $1$ has preferences $o_1 \succ o_2 \succ \cdots \succ o_n$. The probability of the event $o_i$ is strictly preferred to $o_j$ by one agent is $1/2$ if preferences are strict. As preferences are not strict, this probability becomes an upper bound (think for instance if the agent values all the objects the same then the probability to have strict preference between two objects is zero). Hence, the probability of the event $o_i$ is strictly preferred to $o_j$ by all agents is upper bounded by $1/2^{n-1}$ as the preferences between the agents are independent. Assuming, for all pairs of items, these events to be independent (which is not the case, hence an upper bound of the upper bound), we derive our result by summing up over the $n(n-1)/2$ possible pairs.  
\end{proof}
Note that this value quickly tends towards $0$: unanimous envy is thus already unlikely to occur for 10 agents. 
%We start by establishing a simple result, which shows that under impartial culture, 

\paragraph{}

We will now show here that finding an allocation that minimizes ($K$-app envy)-freeness can be done in polynomial time. Before introducing the idea, we need an additional notation. For any pair $(j, j')$, let $\#_\prec(j, j')$ denote the number of agents strictly preferring $o_{j'}$ to $o_{j}$. For any agent $a_i$ and object $o_j$, we will also define $maxEnvy[i][j]$ as follows:
\[
maxEnvy[i][j] = \max_{o_{j'} \text{ s.t. } u(i, j') > u(i, j)} \#_\prec(j, j')
\]
In other words, $maxEnvy[i][j]$ denotes the maximal value of $\#_\prec(j, j')$ among the objects that are strictly preferred to $o_j$ by $a_i$. As we can imagine, this will exactly be the value of the $K$-app envy experienced by $a_i$ if she gets item $o_j$ (note that if $o_j$ is among $a_i$'s top objects, this value will be 0).

The key to the algorithm is to see that for a given $K$, determining whether a ($K$-app envy)-free allocation exists can be done in polynomial time by solving a matching problem. Namely, for each $K$, we build the following bipartite graph: $(\mathcal{N}, \mathcal{O})$ is the set of nodes, and we add an edge $(a_i, o_j) \in \mathcal{N} \times \mathcal{O}$ if and only if $maxEnvy[i][j]$ is lower than or equal to $K$. We can observe that any perfect matching in this graph corresponds to a ($(K+1)$-app envy)-free allocation. 
%\sylvain{Le passage suivant (en cyan) me semble évident. Je le supprimerais pour gagner de la place. More precisely, if there exists a perfect matching, that means that the allocation $\pi$ resulting from the perfect matching is ($(K\!+\!1)$-app envy)-free but there could exist another allocation with lower app-envy freeness. If there is no perfect matching, then there could exist a ($h$-app envy)-free allocation  with $h>K+1$.} 
The only thing that remains to do is to run through all possible values of $K$, which can be done by dichotomous search between $0$ and $n$. This is formalized in Algorithm~\ref{algo:algoPoly}.

\begin{algorithm}
	\SetKwInOut{Input}{input} \SetKwInOut{Output}{output} \SetKwData{None}{None}
	\SetKwFunction{computeMaxEnvy}{computeMaxEnvy}
	\SetKwFunction{buildBipartiteGraph}{buildBipartiteGraph}
	\SetKwFunction{perfectMatching}{perfectMatching}
	\caption{Minimizing ($K$-app envy)-freeness in the HAP}
	\label{algo:algoPoly}
	\Input{$I = \langle \mathcal{N}, \mathcal{O}, w \rangle$ a HAP instance}
	\hidden{
		\Output{Allocation $\pi$ and its level minimizing the ($K$-app envy)-freeness or \None if $I$ is a unanimous envy instance}
	}
	$maxEnvy \gets \computeMaxEnvy{}$\;
	
	res $\gets$ \None\;
	
	low, high $\gets$ $0,n$\;
	
	\While{low$\leq$ high}{
		$i \gets \left\lfloor (low+high)/2 \right\rfloor$\;
		$G \gets \buildBipartiteGraph{maxEnvy,i}$\;
		$\pi \gets \perfectMatching{G}$\;
		
		\If{$\pi$ is not \None}{
			res $\gets$ $\pi,i+1$\;
			
			high $\gets$ $i-1$\;
		}\Else{
			low $\gets$ $i+1$\;
		}
	}
	
	\Return{res}
\end{algorithm}

\begin{proposition}
	\label{prop:complHapPolyn3logn}
	For any HAP instance, we can find (one of) its optimal ($K$-app envy)-free allocations in $O(n^3\log(n))$.
\end{proposition}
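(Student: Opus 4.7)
The plan is to establish correctness of Algorithm~\ref{algo:algoPoly} and then bound its running time by $O(n^3\log n)$. The cornerstone is the following preliminary claim: in a HAP allocation $\pi$ where agent $a_i$ holds $o_j$, the maximum $K$ such that $a_i$ $K$-app envies some agent in $\pi$ equals $maxEnvy[i][j]$. The proof of this claim is direct from the structure of HAP. Since every other agent holds exactly one item, the approvers of $a_i$'s envy towards the holder of $o_{j'}$ are exactly the agents $a_k$ with $u(k,j')>u(k,j)$, that is, a set of size $\#_\prec(j,j')$ which automatically contains $a_i$ herself whenever the envy is genuine (i.e.\ $u(i,j')>u(i,j)$). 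Maximizing over all such $o_{j'}$ yields $maxEnvy[i][j]$.

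From this claim, an allocation $\pi$ is ($K'$-app envy)-free if and only if $maxEnvy[i][\pi(i)]<K'$ for every $i$, which is in turn equivalent to $\pi$ corresponding to a perfect matching in the bipartite graph $G_{K'-1}$ built in the algorithm. Hence minimizing the approval envy level amounts to finding the smallest $K^\ast \in \{0,\dots,n\}$ for which $G_{K^\ast}$ admits a perfect matching; the optimum is then $K^\ast+1$, and if no such $K^\ast\leq n-1$ exists the instance is a unanimous envy one. Because $G_K \subseteq G_{K+1}$, existence of a perfect matching is monotone in $K$, which justifies the dichotomic search.

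For the running time, I would analyse each stage. Preprocessing: each of the $O(n^2)$ quantities $\#_\prec(j,j')$ is computed in $O(n)$ by scanning all agents, and each of the $n^2$ entries $maxEnvy[i][j]$ is then filled in $O(n)$ by scanning the objects strictly preferred by $a_i$ to $o_j$; this totals $O(n^3)$. The dichotomic search runs in $O(\log n)$ iterations; within each iteration, constructing $G_i$ costs $O(n^2)$ and computing a bipartite perfect matching (or certifying that none exists) costs $O(n^{5/2})$ via Hopcroft--Karp, or $O(n^3)$ by a Hungarian-style routine. Summing gives $O(n^3)+O(n^3\log n)=O(n^3\log n)$.

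The only delicate point is the preliminary claim: one must check carefully that the set of approvers of a single envy relation in a HAP allocation really does coincide with $\{a_k : u(k,j')>u(k,j)\}$, with no off-by-one subtlety concerning $a_i$ herself. Once this identification is pinned down, both the reduction to bipartite matching and the complexity bound follow routinely.
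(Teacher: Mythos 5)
Your proof is correct and follows essentially the same route as the paper's: compute the $\#_\prec$ and $maxEnvy$ tables in $O(n^3)$, reduce ($K$-app envy)-freeness to a perfect-matching test in the thresholded bipartite graph, and binary-search over $K$ for a total of $O(n^3\log n)$. The only difference is that you spell out the correctness of the matching reduction (the identification of approvers with $\{a_k : u(k,j')>u(k,j)\}$ and the monotonicity of $G_K$ in $K$), which the paper states without proof in the text preceding the algorithm rather than inside the proof itself.
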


\begin{proof}
	First, the computation of the matrix \emph{maxEnvy} runs in $O(n^3)$. Indeed, to compute $\emph{maxEnvy}[i][j]$ we first need to compute $\#_\prec(j, j')$ which already runs in $O(n^3)$ as we have to ask for each couple of objects ($n^2$ in total) the point of view of all the agents ($n$ in total). From that, as $maxEnvy[i][j] = \max_{o_{j'} \text{ s.t. } u(i, j') > u(i, j)} \#_\prec(j, j')$, we can compute $maxEnvy[i][j]$ in $O(n)$. As there are $n^2$ different pairs $(a_i,o_j)$ we have the final $O(n^3)$ complexity of computing \emph{maxEnvy}.
	
	Due to the dichotomous search, the algorithm needs to solve $\log(n)$ perfect matching problems, that can be solved in $O(n^3)$\cite{minouxgondran1984graphs}. The global complexity of Algorithm \ref{algo:algoPoly} is thus $O(n^3\log(n))$.
\end{proof}

\hidden{
	Of course, many allocations may enjoy the same level of ($K$-app envy)-freeness. However, from Lemma \ref{lemme1}, we know that it is possible to perform reallocation cycles on the returned allocation without degrading $K$ \ab{Cela ne me semble plus aussi évident maintenant}: in this house allocation setting, performing such cycles until no further improvement is possible will guarantee to return a Pareto-optimal allocation as bundles are composed of one and exactly one object. Note that this does not affect the overall worst-case complexity of the procedure.  
}

\section{\uppercase{Experimental results}}

We present here the results of the numerical tests we have conducted. These experiments serve two purposes: (i) evaluate the behaviour of the MIP we presented in Section \ref{sectionMIP} and of the polynomial algorithm described in Section \ref{sectionPoly}, and (ii) observe how our notion of $K$-app envy depends on the number of agents, of items, and on the type of preferences. All the tests presented in this section have been run on an Intel(R) Core(TM) i7-2600K CPU with 16GB of RAM and using the Gurobi solver to solve the Mixed Integer Program. We have tested our methods on three types of instances: Spliddit instances \cite{Spliddit}, instances under uniformly distributed preferences and instances under an adaptation  of Mallows distributions to cardinal utilities \cite{MallowsVMFCardinal}.

\subsection{Spliddit instances}
We have first experimented our MIP on real-world data from the fair division website Spliddit \cite{Spliddit}. There is a total of 3535 instances from 2 agents to 15 agents and up to 93 items. Note that 1849 of these instances involve 3 agents and 6 objects.
By running the MIP with a timeout of 10 minutes (after this duration the best current solution, if it exists, is returned) we were able to solve all the instances but 6. Among these 6 instances, 3 of them were HAP instances that we managed to  solve optimally with Algorithm~\ref{algo:algoPoly}. This only leaves us with 3 instances, for which the solver did return a solution but did not prove that it is optimal (within a timeout of 10 minutes).
Besides, 65\% of the instances are EF while 23\% of the instances exhibit unanimous envy. Moreover, 28\% of the remaining instances with more than 5 agents are SM-app EF.
%mean value of K is 4.79 and mean value/n is 0.60
%812/3478=0.23 no K app EF
%2275/3478=0.65 EF

\subsection{Uniformly distributed preferences}
\paragraph{General setting}
We also ran tests on instances under uniformly distributed preferences, with  $n$ varying from 3 to 10 and $m$ such that we produce settings where few EF allocations exist \cite{Dickerson2014}. For each problem size, we % have, in the results presented in the Table~\ref{table1}, 
kept 60 instances that admit no EF allocation as we wanted to measure the behaviour of our notion when no such allocation exists (we know that if an EF allocation exists it will be returned by  our methods). As we are in the general setting we solved the instances via the MIP with a timeout of 60 seconds. 

The first three rows of Table~\ref{table1} respectively represent the percentage of instances that have been solved to optimal (a solution has been returned before the timeout), the percentage of unanimous envy instances and the percentage of SM-app-EF instances. We then have the mean value of $K/n$. Finally, we store the mean computation time (in seconds) of the instances (solved to optimal).

\begin{table}
	\begin{center}
		{\caption{Results of the experiments as a function of the number of agents.}\label{table1}}
		\scalebox{0.80}{
			\begin{tabular}{p{0.07\textwidth}|*{9}{l}}
				\toprule
				n & 2 & 3 & 4 & 5 & 6 & 7 & 8 & 9 & 10 \\
				\midrule
				\% OPT & 100 & 100 & 100 & 100 & 100 & 68.3 & 1.7 & 1.7 & 0 \\
				\% UEI & 100 & 21.7 & 5 & 0 & 0 & 0 & 0 & 0 & 0 \\
				\% SMAEF & 0 & 0 & 0 & 50 & 50 & 75 & 40 & 33.3 & 6.7 \\
				mean($K/n$) & NaN & 1 & 0.85 & 0.72 & 0.61 & 0.57 & 0.59 & 0.63 & 0.66 \\
				time(s) & $\varepsilon$ & 0.008 & 0.04 & 0.21 & 1.97 & 21.29 & 50.09 & 56.16 & NaN \\
				\bottomrule
			\end{tabular}
		}
	\end{center}
\end{table}

First note that considering 2 agents is a special case as shown in Corollary~\ref{corollary:2agentsEForUE}. Indeed, as we have removed the EF instances, all the remaining instances are unanimous envy ones. Moreover, we observe that the percentage of SM-app-EF allocations is zero for 4 agents. Indeed, an allocation is SM-app-EF for 4 agents  if there exists a ($K$-app envy)-free allocation such that $K\leq 2$. As we have removed all the EF instances,  we know (from Proposition~\ref{proposition:prop2appEF}) that we cannot find an SM-app-EF allocation.
The same holds for 3 agents. 

We can notice that the mean $K/n$ seems to be stabilising around 0.6. %, as it was the case for the Spliddit instances. 
Besides, without any surprise, the computation time rapidly increases while the percentage of instances solved to optimal (under a timeout of 60 seconds) starts decreasing for 7 agents. Finally, positive results can be pinpointed: the very low percentage of unanimous envy instances, and the pretty high percentage of SM-app-EF ones.

\paragraph{House allocation}

We have also tested our polynomial algorithm on HAP instances under uniformly distributed preferences. We have generated 20 instances for each number of agents from 5 to 100 agents (and objects) by steps of 5.

\hidden{
	\begin{figure}
		\centering
		\includegraphics[scale=0.4]{graph/k_app_envy_HAP_Poly.png}
		\caption{Boxplot of the optimal value of ($k$-app envy)-free allocations in the HAP as a function of the number of agents and objects}
		\label{fig:k_app_envy_HAP_Poly}
	\end{figure}
}

\begin{figure}
	\centering
	\includegraphics[scale=0.6]{./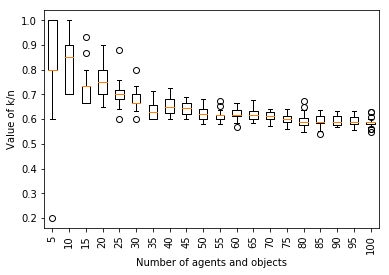}
	\caption{Optimal $K/n$ in the HAP as a function of $m$/$n$}
	\label{fig:k_app_dividedbyN_envy_HAP_Poly}
\end{figure}

First note that we have only found 5 unanimous envy instances and all of them  involved 5 agents. This supports the probability of unanimous envy instance showed in Proposition \ref{prop:probaUnanimous} and the fact that it decreases very quickly towards 0. Moreover, like for the general setting, we notice a convergence of the $K/n$ values towards 0.6.
The algorithm runs, without any surprise (in light of Proposition \ref{prop:complHapPolyn3logn}) much faster than our MIP. Indeed, the mean runtime for 100 objects and agents is still around 2 seconds only whereas we already observed that our MIP cannot solve easier problems within 10 minutes.
% analyser courbe K ou K/n en fonction de ce que l'on garde

\subsection{Correlated preferences}

In strict ordinal settings, a classical way to capture correlated preferences is to use Mallows distributions \cite{MallowsModel57} allowing us to measure the impact of the similarity of the preferences between agents. %This notion is conveyed by a parameter $\phi$ called dispersion: when $\phi = 0$ all the agents have the same preferences while the bigger $\phi$ gets the more we tend towards uniformly distributed preferences. 
In these experiments, we used a generalization of the Mallows distribution to cardinal preferences presented in \cite{MallowsVMFCardinal} based on Von Mises–Fisher distributions. %ref?
%The role of the dispersion parameter is played here 
Similarly to the dispersion parameter in Mallows distributions, the similarity between the preferences of the agents is tuned by the \emph{concentration} parameter: when it is zero agents' preferences are uniformly distributed, whereas when it is infinite agents have the same preferences. 

%For both classical envy-freeness and ($K$-app envy)-free notions, 
We expected that the more similar the preferences between the agents are, the higher the degree of $K$-app envy would be and the more likely unanimous envy would occur. The results of our experiments both  in the general setting and in HAP support this: the number of EF instances is decreasing along with the concentration value, and from a given threshold, all the instances exhibit unanimous envy. However, the exact correlation between the level of ($K$-app) envy-freeness and the concentration deserves further study, especially for very low values of $K$. Intuitively, in some circumstances, correlation of preferences may indeed help to find large majorities of agents that contradict an agent envy, while this situation is unlikely under uniformly distributed preferences. 
%and it could be interesting to dig in this direction for future work.
\hidden{
	\paragraph{General Setting}
	As the aim of the experiment is to show the impact of the concentration on the existence of our notion, we have fixed $n=3$ and $m=6$ and for each concentration value, we have generated 50 instances. The results of the experiments support our conjecture. Indeed, as soon as the concentration value exceeds 50 there are only unanimous envy instances while for lower values there was existence and even EF instances.
	\hidden{
		\begin{figure}
			\centering
			\includegraphics[scale=0.6]{./mallowsCardinal_3_6_Dispersion.png}
			\caption{Boxplot of the optimal value of ($k$-app envy)-free allocations with 3 agents and 6 objects VMF distributions under different dispersions}
			\label{fig:MallowsCardinalConc}
		\end{figure}
	}
	\paragraph{House allocation}
	As the polynomial algorithm runs much faster than our MIP, we were able to test the influence of the dispersion value on 100 instances of size $n=m=30$ for each concentration value from $0$ to $100$ by steps of $10$, then from $100$ to $1000$ by steps of $100$ and finally infinite concentration (meaning all the agents have the same preferences).
	In a very similar way we can see how Figure \ref{fig:MallowsOrdinal} confirms our intuition as we move from complete existence for low concentration values to total non-existence of our notion (all the instances are unanimous envy ones) when agents have the same preferences.
	\begin{figure}
		\centering
		\includegraphics[scale=0.6]{graph/mallowsPoly30_30_100_Indifferences_NoOutliers.png}
		\caption{Boxplot of the optimal ($K$-app envy)-free allocations level $K$ in the HAP with VMF distributions under different concentrations}
		\label{fig:MallowsOrdinal}
	\end{figure}
}

\section{\uppercase{Conclusion}}

In this paper, we have introduced a new relaxation of envy-freeness. This relaxation uses a consensus notion, approval envy, as a proxy for objective envy between pairs of agents. We have proposed algorithms to compute an allocation minimizing the $K$-app envy, and we have experimentally shown that this notion makes sense in practice in situations where no envy-free allocation exists.

This work also opens to a more general study of consensus-based notions of envy. For instance, instead of focusing on approval envy between agents, one could also be interested in using consensus to determine whether a given agent should be envious in general or not. More generally, one could also look for allocations that are judged envy-free by a given quota of agents. We leave the study of these notions for future work.

%[MAYBE DISCUSS ALTERNATIVE VERSIONS IN CONCLUSION]

\hidden{
	\begin{example}
		\label{basic-example}
		Let us consider the following instance with 5 agents and 5 objects and the circled allocation $\pi$:
		\begin{center}
			$
			\begin{array}{c|cccc}
			& o_1 & o_2 & o_3 & o_4 \\
			\hline
			a_1 & 1 & 2 & \al{3} & 4   \\
			a_2 & 3 & \al{4} & 1 & 2    \\
			a_3 & \al{3} & 2 & 4 & 1    \\
			a_4 & 3 & 2 & 1 & \al{4}   \\
			\end{array}
			$
		\end{center}
		Agent $a_1$ is envious of agent $a_4$. Beyond $a_1$, two other agents approve this envy ( $a_2$ and $a_4$). Agent $a_3$ is also envious (of agent $a_1$), which is only approved by agent $a_1$ herself. 
	\end{example}
	
	Another version could be conceived where, instead of asking agents to vote on pairwise envies, they would vote on the fact that they believe the agent is envious or not. The following example illustrates the difference.
	
	\begin{example}
		\label{basic-example-ctd}
		Consider the same instance as in Example \ref{basic-example}, and the allocation such that agent $a_i$ holds $o_i$. 
		Agent $a_1$ is envious of all other agents. Each other agent $a_j$ agrees on the fact that $a_1$ should envy herself ($a_j$), but not the other ones. Hence, the allocation is (3-app envy)-free, however all agents agree that $a_1$ is right to be envious (but not for the same reasons).  
	\end{example}
}

\hidden{With the deliberative scenario sketched in the introduction in mind, we believe our pairwise definition is more appropriate. Indeed, it allows to identify exactly where disagreement occurs, hence leading to consensus after deliberation. In our example, it is not clear that agents $a_2$, $a_3$ and $a_4$ could convince $a_1$ she should not be envious if they do not agree in the first place.  }

\bibliography{partage.bib}
\end{document}